\documentclass{applemlr}          

\makeatletter
\newif\ifneurips
\@ifclassloaded{applemlr}{\neuripsfalse}{\neuripstrue}
\makeatother

\ifneurips
  \usepackage[preprint]{neurips_2026} 
  \usepackage{etoolbox}                
  \usepackage[table,dvipsnames]{xcolor}
  \usepackage[most]{tcolorbox}         
  \usepackage[colorlinks,linkcolor=blue,citecolor=blue,urlcolor=blue]{hyperref}
  \definecolor{fgcolor}{HTML}{333333}
  \definecolor{bgcolor}{HTML}{EEEEEE}
\fi

\usepackage{amsmath}
\usepackage{enumerate}
\usepackage{algorithm}
\usepackage{algpseudocode}
\usepackage{amsfonts}
\usepackage{amsthm}
\usepackage{cleveref}
\usepackage{diagbox}
\usepackage{colortbl}
\usepackage{amssymb}
\usepackage{xspace}
\usepackage{wrapfig}
\usepackage{adjustbox}
\usepackage{tabularx}
\usepackage{booktabs}
\usepackage{mathtools}
\usepackage{tikz}
\usepackage{enumitem}
\usepackage{silence}
\usepackage{dsfont}
\usepackage[table]{xcolor}
\usepackage[dvipsnames]{xcolor}
\usepackage{multirow}
\usepackage{makecell}
\usepackage{xfakebold}

\usepackage{amsmath,amsfonts,bm}

\def\eqref#1{equation~\ref{#1}}

\def\1{\bm{1}}

\DeclareMathAlphabet{\mathsfit}{\encodingdefault}{\sfdefault}{m}{sl}
\SetMathAlphabet{\mathsfit}{bold}{\encodingdefault}{\sfdefault}{bx}{n}

\definecolor{textgray}{HTML}{6E6E73}
\usetikzlibrary{positioning, calc}
\usetikzlibrary{decorations.pathmorphing}

\makeatletter
\patchcmd{\wrong@fontshape}{\@gobbletwo}{}{}{}
\makeatother
\numberwithin{equation}{section}
\makeatletter
\AtBeginDocument{
  \urlstyle{sf}
  
}
\makeatother

\definecolor{light}{RGB}{125, 125, 125}
\crefname{tcb@cnt@pbox}{code}{code}
\Crefname{tcb@cnt@pbox}{Code}{Code}
\crefname{assumption}{assumption}{assumption}
\Crefname{assumption}{Assumption}{Assumptions}

\newtcolorbox[auto counter]{pbox}[2][]{
  colback=white,
  title=Code~\thetcbcounter: #2,
  #1,fonttitle=\sffamily,
  fontupper=\sffamily,
  arc=2pt,
  colframe=bgcolor,
  coltitle=fgcolor,
  colbacktitle=bgcolor,
  toptitle=0.25cm,
  bottomtitle=0.125cm
}

\makeatletter
\newcommand\applefootnote[1]{%
  \begingroup
  \renewcommand\thefootnote{}%
  \renewcommand\@makefntext[1]{\noindent##1}%
  \footnote{#1}%
  \addtocounter{footnote}{-1}%
  \endgroup
}
\makeatother

\definecolor{cverbbg}{gray}{0.90}

\usepackage[utf8]{inputenc}
\usepackage[T1]{fontenc}
\usepackage{graphicx}
\usepackage{url}
\usepackage{microtype}
\usepackage{placeins}
\usepackage[most]{tcolorbox}

\definecolor{cbpurple}{HTML}{785EF0}
\definecolor{cbpurplebg}{HTML}{F0EDFD}

\newtcolorbox{resultbox}[1][]{
    enhanced,
    breakable,
    colback=cbpurplebg,
    colframe=cbpurple,
    boxrule=1pt,
    arc=3pt,
    left=10pt, right=10pt, top=8pt, bottom=8pt,
    fonttitle=\bfseries\sffamily,
    coltitle=white,
    colbacktitle=cbpurple,
    title=#1,
    attach boxed title to top left={xshift=10pt, yshift=-\tcboxedtitleheight/2},
    boxed title style={arc=2pt, boxrule=0pt},
    drop fuzzy shadow=cbpurple!30!white,
  }

\newtheorem{theorem}{Theorem}
\newtheorem{proposition}[theorem]{Proposition}
\newtheorem{definition}[theorem]{Definition}
\newtheorem{corollary}[theorem]{Corollary}

\title{Limits of Confidence in Diffusion}

\newcommand{\abstracttext}{%
Discrete diffusion, including remasking and uniform-state samplers, generate a sequence by writing multiple token positions per step, drawing each from a per-position distribution and choosing which positions to write from those same distributions.  For domains of general interest (pixels, phonemes, or words) there are inherent dependencies between tokens. We show that a step matches the training distribution only when the positions it writes are conditionally independent given the tokens already fixed, that no product of per-position distributions can match a dependent group, and that per-position distributions do not determine whether a group is dependent:  two joint distributions can have identical per-position marginals while differing in which combinations of values occur. On ScanAndAdd, a synthetic task whose joint distribution is available in closed form, we verify that every group of two or more undetermined positions a confidence ranking writes is dependent, and measure the generated distribution to be $29\times$ the sampling-noise floor total variation while per-sample metrics are $1.0$.%
}

\ifneurips
  \author{%
    Russ Webb \quad Amitis Shidani \quad Alice Bizeul \quad Dan Busbridge \\
    Apple \\
    \texttt{\{rwebb, dbusbridge, amitis\_shidani, abizeul\}@apple.com}
  }
  \date{}
\else
  \author{Russ Webb}
  \author{Amitis Shidani}
  \author{Alice Bizeul}
  \author{Dan Busbridge}
  \affiliation{Apple}
  \abstract{\abstracttext}
  \metadata[Correspondence]{\sffamily \{rwebb, dbusbridge, amitis\_shidani, abizeul\}@apple.com}
  \date{\sffamily\today}
\fi

\begin{document}

\maketitle

\ifneurips
\begin{abstract}
\abstracttext
\end{abstract}
\fi

\section{Introduction}
\label{sec:intro}

A discrete diffusion model builds a sequence in a fixed number of steps, writing several token positions
at each one and choosing which positions to write from the model's own per-position predictions
\citep{austin2021d3pm,chang2022maskgit,sahoo2024mdlm,wang2025remdm}. Writing several positions per step
is the reason these models are of interest, in order to produce a sample in fewer steps. We show that the decision of what to write simultaneously cannot be made from the per-position distributions common approaches read: a step that writes two dependent positions produces the wrong distribution by at least the total correlation (TC) of the group it writes, and this distribution shift exists in fully converged models. 

This distribution shift is not visible via single sample metrics like correctness, so we work on a task whose joint distribution is available in closed form, where a distortion in density can be distinguished from sample diversity. A
model trained on ScanAndAdd reaches $1.00$ well-formedness, correctness, and uniqueness early in training. Figure~\ref{fig:frontier} shows the total variation of tokens (TVT) from one trained model. When the positions written at each step are chosen from the model's own confidences, the generated distribution stays far from the training distribution; although the model reaches $1.0$ correctness it sits $29$ times the sampling-noise floor away in TVT. Driven instead by a hand-specified write order, the same model produces a token distribution that TVT cannot separate from the training data, at correctness $0.987$ to $0.996$. Section~\ref{sec:measured} gives further explanation and details.

\begin{figure}[t]
\centering
\includegraphics[width=0.7\textwidth]{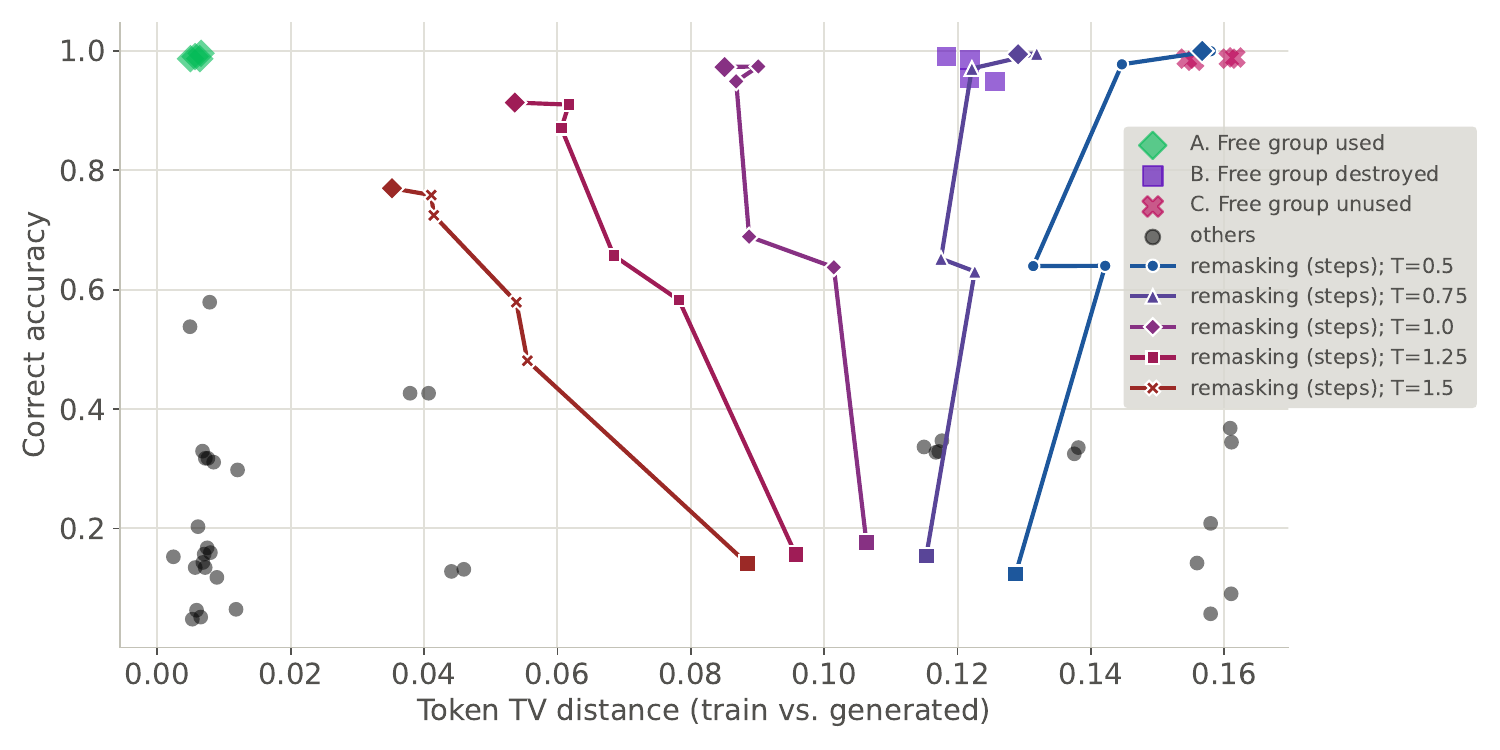}
\caption{Correctness and token TV distance for fixed write orders (scatter) and
confidence-ordered remasking (lines), from the same converged model with $2k$ samples per point. The scatter is grouped by what each ordering does with the sample values (the free group) that may be written together when no answer digit has been
written (Section~\ref{sec:instance}). Group~A \emph{uses} the free group: values in parallel, commands one position
at a time, answer last. Group~B \emph{destroys} the free group by making it entangled: the answer is written before the values, which couples the two the head reads. Group~C samples the free group sequentially: values and commands one position at a time, answer last. Only group~A reaches the noise floor, $\mathrm{TV}\approx0.0054$ at the left edge;
groups~B and~C stay correct at roughly $22$ and $29$ times the sampling floor. Each line sweeps step count $T\in\{5,10,15,20,25,30\}$ at one temperature,
$\tau\in\{0.5,0.75,1.0,1.25,1.5\}$; squares are at $T=5$ and diamonds at $T=30$.}
\label{fig:frontier}
\end{figure}


\paragraph{Roadmap.} Section~\ref{sec:claim} states the claim as one definition of a step covering three
common diffusion families and four results about it; the rest of the paper validates these claims with ScanAndAdd,
a synthetic task whose conditionals, entropies, and support are all computable.

\section{When Matching the Training Distribution is Impossible}
\label{sec:claim}

\textbf{A step that writes two positions which are dependent given the tokens already fixed produces
the wrong distribution.} 
The divergence is at least the total correlation of the group the step writes and persists even when every per-position prediction by the model is exact. Nothing about the data is assumed beyond the dependence between positions (Theorem~\ref{thm:step}). \textbf{No collection of per-position distributions determines which groups are independent}, so a sampler reading them cannot certify any group of two or more undetermined positions (Theorem~\ref{thm:cert}). The states a dependent step produces are ones at which the training objective leaves the conditionals undetermined (Theorem~\ref{thm:dich}), and under hard constraints a sampler that writes one position at a time cannot change a position the rest of the sequence pins, so the valid sequences that differ there are unreachable (Proposition~\ref{prop:frozen}).

Let $p$ be a distribution on $\mathcal V^L$, and a \emph{position} is one of the $L$ token positions of the
sequence.

\begin{definition}[Independent-update step and sampler]\label{def:step}
A sampler has a state, which assigns to each position either a token of $\mathcal V$ or the mask
symbol, and repeatedly applies the following step. First the network supplies one distribution $\pi_i$
over $\mathcal V$ at every position the step is allowed to write, computed from the information
$x_{\mathrm{given}}$ that the algorithm gives it, which is specified per family below. The \emph{scheduler}
then selects the set $R$ of positions to write, which we call a \emph{group}, and the step draws the new value at each $i\in R$ independently from $\pi_i$, leaving every other position unchanged, written $x_{-R}$ and called the \emph{frozen} tokens. The
step's law on $x_R$ is therefore the product $\prod_{i\in R}\pi_i(x_i)$, and the step is
\emph{parallel} if $|R|\ge2$. 

The only assumption made about the scheduler is that its choice of $R$ depend on
$p$ only through per-position distributions of this kind, whether from the current pass or an earlier
one: it may rank positions by them, or by the probability of a token drawn from them, or use any
randomization independent of $p$. We call a sampler whose every step has this form an
\emph{independent-update sampler}.
\end{definition}

The three common diffusion methods differ only in $x_{\mathrm{given}}$:
\begin{itemize}
\item \emph{Absorbing-state masked diffusion} \citep{austin2021d3pm,chang2022maskgit,sahoo2024mdlm}.
$R$ is a set of masked positions, written once and never revisited. Here $x_{\mathrm{given}}$ is
the complement of the currently masked set. Mask symbols carry no information.
\item \emph{Remasking} \citep{ghazvininejad2019maskpredict,wang2025remdm}. The same, except that $R$
may contain positions that already hold tokens. In practice, the step returns those positions to the mask symbol rather than drawing replacement values in place, and a
later step fills them; the results below cover whichever step draws the values, since returning a position to the mask symbol commits to no distribution over
$\mathcal V$. Both the distributions the sampler uses to determine $R$ and those (possibly different) used to determine the values written there are per-position
distributions computed from the state, which is all the results below require.
\item \emph{Uniform-state diffusion} \citep{austin2021d3pm,lou2024sedd}. There is no mask token, so
every position always holds a token and $x_{\mathrm{given}}$ is the entire current sequence. For this family, we read the claims below at the terminal phase of the trajectory.
\end{itemize}


We assume each $\pi_i$ to be as good as training can make it, the exact conditional from $x_{\mathrm{given}}$, which for the first two families is the population minimizer of the per-position masked cross-entropy loss these models are trained with \citep{devlin2019bert,austin2021d3pm,sahoo2024mdlm}. The per-step results below do not depend on how the scheduler ranks positions: Definition~\ref{def:step} admits any rule computed from the $\pi_i$, from tokens drawn from them, or from randomness independent of $p$ --- entropy or margin as much as confidence --- and the proof of Theorem~\ref{thm:cert} uses only the distribution of such a quantity, not how it is computed. What the rule affects is how those per-step costs accumulate over a run (Appendix~\ref{app:onlyworse}). \emph{Confidence} is the quantity ranked in practice, where two forms are common: $\max_v \pi_i(v)$, the largest probability the distribution at position $i$ assigns to any token, and $\pi_i(t_i)$, the probability of the token $t_i$ the sampler has just drawn there, which is what MaskGIT-style and remasking implementations rank. Table~\ref{tab:conf} and Section~\ref{sec:instance} use the first, since at the all-masked state no token has yet been drawn.
  
\begin{definition}[Entangled set]\label{def:ent}
The set $R$ is \emph{entangled} at $x_{-R}$ if there is some combination of tokens that the positions of $R$
can each take separately but not together: there is an assignment $x_R$ with
$p(x_i\mid x_{-R})>0$ for every $i\in R$, while $p(x_R\mid x_{-R})=0$. In the two-position case, both
$a$ at $i$ and $b$ at $j$ occur in the data given $x_{-R}$, but not in the same sequence. Entanglement implies dependence --- independence would give $p(x_R\mid x_{-R})=\prod_{i\in R}p(x_i\mid x_{-R})>0$ --- but not conversely, since a dependent set can have full support. A \emph{free} set is one that is not entangled.
\end{definition}

\begin{theorem}[No product matches a dependent group]\label{thm:step}
If the positions of $R$ are not mutually independent under $p(\cdot\mid x_{-R})$, then no product
$\prod_{i\in R}\pi_i$ equals $p(x_R\mid x_{-R})$, whatever the $\pi_i$ are and whatever
$x_{\mathrm{given}}$ they were computed from. Quantitatively,
\[
\begin{aligned}
D_{\mathrm{KL}}\!\Big(p(x_R\mid x_{-R})\;\Big\|\;\prod_{i\in R}\pi_i\Big)
&=\underbrace{\mathrm{TC}(X_R\mid x_{-R})}_{\text{grouping}}
+\sum_{i\in R}\underbrace{D_{\mathrm{KL}}\big(p(x_i\mid x_{-R})\,\big\|\,\pi_i\big)}_{\text{per-position error at }i},\\[2pt]
\mathrm{TC}(X_R\mid x_{-R})&=\sum_{i\in R}H(X_i\mid x_{-R})-H(X_R\mid x_{-R}),
\end{aligned}
\]
so the divergence is at least $\mathrm{TC}(X_R\mid x_{-R})>0$, with equality exactly when every
$\pi_i$ is the true marginal $p(x_i\mid x_{-R})$. If in that best case $R$ is entangled at $x_{-R}$,
the step assigns positive probability to a state outside $\operatorname{supp}(p)$.
\end{theorem}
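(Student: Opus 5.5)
The plan is to prove the quantitative identity first; the qualitative claims then follow from it. Write $q(x_R)=p(x_R\mid x_{-R})$ and $q_i(x_i)=p(x_i\mid x_{-R})$ for its marginals. If some $\pi_i$ fails to cover the support of $q_i$, both sides are $+\infty$ and the identity holds trivially, so assume absolute continuity. Then, on the support of $q$, split the log-ratio as
\[
\log\frac{q(x_R)}{\prod_{i\in R}\pi_i(x_i)}=\log\frac{q(x_R)}{\prod_{i\in R}q_i(x_i)}+\sum_{i\in R}\log\frac{q_i(x_i)}{\pi_i(x_i)},
\]
and take the expectation under $q$.

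The first term has expectation $\sum_i H(X_i\mid x_{-R})-H(X_R\mid x_{-R})=\mathrm{TC}(X_R\mid x_{-R})$. This is the standard expansion of $D_{\mathrm{KL}}(q\,\|\,\prod_i q_i)$. In the second term, each summand depends only on $x_i$, so its expectation under $q$ equals its expectation under the marginal $q_i$, namely $D_{\mathrm{KL}}(q_i\,\|\,\pi_i)$. That gives the displayed decomposition. The argument never uses how $\pi_i$ was produced or from which $x_{\mathrm{given}}$, which is what justifies the phrase ``whatever the $\pi_i$ are.''

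Next comes the lower bound and the equality case. Each per-position term is nonnegative by Gibbs' inequality and vanishes iff $\pi_i=q_i$. So the divergence is at least $\mathrm{TC}$, with equality exactly when every $\pi_i$ is the true marginal. Also $\mathrm{TC}=D_{\mathrm{KL}}(q\,\|\,\prod_i q_i)$ is zero iff $q=\prod_i q_i$, that is, iff the positions are mutually independent. Under the hypothesis, therefore, $\mathrm{TC}>0$, the divergence is strictly positive for every choice of $\pi_i$, and no product equals $q$.

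Finally, the entangled case. With $\pi_i=q_i$, take the assignment $x_R$ from Definition~\ref{def:ent}. Its product probability $\prod_i q_i(x_i)$ is positive, while $q(x_R)=0$, so $p(x_R,x_{-R})=0$. The step therefore places positive mass on the full state $(x_R,x_{-R})\notin\operatorname{supp}(p)$.

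The only delicate point is the measure-theoretic bookkeeping: infinite divergences, and restricting the log-ratios to the support of $q$. I would handle this with the convention $0\log 0=0$ and the case split above; everything else is routine.
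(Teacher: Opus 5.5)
Your proposal is correct and follows the paper's proof. Both split the log-ratio into the total-correlation term plus per-position terms, using that each $\log(q_i/\pi_i)$ depends only on $x_i$; both use nonnegativity of each divergence (zero only under mutual independence, respectively only at $\pi_i=q_i$) for the bound and equality case; and both use a coordinatewise-admissible assignment with $p(x_R\mid x_{-R})=0$ for the support claim. Your explicit handling of the case where some $\pi_i$ misses part of the support of $p(x_i\mid x_{-R})$, where both sides are $+\infty$, is a detail the paper leaves implicit.
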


\begin{proof}
Expanding the divergence and using that $\sum_{x_R}p(x_R\mid x_{-R})\log \pi_i(x_i)$ depends on
$p(\cdot \mid x_{-R})$ only through its $i$th marginal gives the displayed decomposition; both terms
are non-negative and the first is zero only under mutual independence \citep{watanabe1960,cover2006}.
For the support claim, take $x_R$ coordinatewise admissible with $p(x_R\mid x_{-R})=0$; the step
assigns it $\prod_i p(x_i\mid x_{-R})>0$, and $(x_{-R},x_R)$ has probability zero under $p$.
\end{proof}

Better training shrinks the per-position terms only, so the grouping term cannot be reduced by training --- a floor \citet{zhang2026generationorderparalleldecoding} establish independently for exact marginals, and which the decomposition above separates from per-position error. 
The floor exists for uniform-state diffusion too, even though its $\pi_i$ are not marginals. There $x_{\mathrm{given}}$ is the entire current sequence, including the tokens the step is about to overwrite at the positions of $R$, so $\pi_i$ is a conditional given those stale tokens rather than the marginal of $p(\cdot\mid x_{-R})$. That gap is a per-position error, and the per-position terms are non-negative, so the divergence between the step's law and $p(\cdot\mid x_{-R})$ can only increase. Exactness therefore requires the sampler to write only groups whose positions are conditionally
independent given the frozen tokens. Whether a group has that property is a fact about the joint law
of the group. What the sampler reads is one distribution per position, and no collection of
per-position distributions records how two positions co-vary.  For a sampler that writes each position once, these per-step costs add and cannot cancel (Appendix~\ref{app:onlyworse}).

\begin{theorem}[No collection of per-position distributions can certify a group]\label{thm:cert}
Let the scheduler read the distributions supplied at the positions the step may write and select a set
$R$ containing at least two positions $i,j$ at which $\pi_i,\pi_j$ are not point masses. Then there are two
distributions on $\mathcal V^L$ whose per-position marginals are equal at all $L$ positions: one under which the positions of $R$ are mutually independent given $x_{-R}$, and one under which $R$ is entangled at $x_{-R}$. Everything
the scheduler reads has the same law under both, including tokens drawn from those distributions, so it selects the same $R$ with the same probability, and the step over that $R$ is exact under one and inexact under
the other.
\end{theorem}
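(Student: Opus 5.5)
The plan is a direct construction: build two distributions $p$ and $q$ on $\mathcal V^L$ that agree on every per-position marginal, including every per-position conditional the network could supply from the current $x_{\mathrm{given}}$. I will then show that one makes $R$ independent given $x_{-R}$ and the other makes it entangled.

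I would start from any distribution $p$ under which the positions of $R$ are mutually independent given the frozen tokens $x_{-R}$, with $\pi_i,\pi_j$ non-degenerate at two positions $i,j\in R$. The simplest choice is a product distribution whose marginals reproduce the given $\pi$'s. Since $\pi_i$ and $\pi_j$ are not point masses, pick values $a\ne a'$ in the support of $\pi_i$ and $b\ne b'$ in the support of $\pi_j$. Define $q$ from $p$ by moving mass $\varepsilon$ on the $2\times2$ pattern: add $\varepsilon$ at $(a,b')$ and $(a',b)$, subtract $\varepsilon$ at $(a,b)$ and $(a',b')$. This is done jointly with the rest of the sequence held fixed, so it is applied inside each context $x_{-R}$. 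Take $\varepsilon$ equal to the joint mass at $(a,b)$, with $a,b$ chosen so that mass is the smaller of the two subtracted cells. Then $q(x_i=a,x_j=b\mid x_{-R})=0$, while $q(x_i=a\mid x_{-R})$ and $q(x_j=b\mid x_{-R})$ stay positive.

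The next step is to verify the key property. The swap leaves every single-coordinate marginal unchanged at every position, both the unconditional ones and those conditioned on any set of coordinates not containing $i$ or $j$. So any $\pi_k$ the network supplies is identical under $p$ and $q$, given an $x_{\mathrm{given}}$ that excludes $i,j$. The family cases of Definition~\ref{def:step} then follow. In the masked families, $x_{\mathrm{given}}$ is the complement of the masked set, which excludes $R$. In uniform-state diffusion, the stale tokens at $R$ enter $x_{\mathrm{given}}$; I would handle this by making the perturbation also independent of, or mirrored on, those stale values, or by reading the claim at the terminal phase as the paper specifies. Because the scheduler depends on $p$ only through these distributions, drawn tokens, and independent randomness, the law of everything it reads coincides, and it selects $R$ with the same probability under both.

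Finally, Theorem~\ref{thm:step} gives the conclusion. Under $p$ the grouping term is zero, so the step with exact marginals is exact. Under $q$, $R$ is entangled at $x_{-R}$: the assignment extending $(a,b)$ has admissible coordinates but zero joint mass. The step then puts positive mass outside $\operatorname{supp}(q)$.

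The main obstacle is the claim that all conditionals the network reads, not just the unconditional marginals, coincide. The $2\times2$ swap must be arranged so that every conditional of a single position given $x_{\mathrm{given}}$ is preserved. I would ensure this by performing the swap on the joint of $(x_i,x_j)$ separately within each fixed value of all other coordinates. This preserves all conditionals $q(x_k\mid x_{S})$ for $S\not\ni i,j$, and also $q(x_i\mid x_S)$ and $q(x_j\mid x_S)$ when $S$ excludes both. Where earlier passes condition on sets containing $i$ or $j$, the argument needs those positions still to be masked at reading time, which holds by definition in the masked families.
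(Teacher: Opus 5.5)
Your proposal is essentially the paper's proof: start from a law under which $X_R$ is independent (internally and of the other positions) with the observed marginals, apply a $2\times2$ swap on $(X_i,X_j)$ that preserves row and column sums, with $\varepsilon$ the smaller subtracted cell so one cell vanishes while both values stay admissible, note that every conditional given a set excluding both $i$ and $j$ is unchanged (the fact behind the paper's phrase about contexts where $R$ can still contain both), and finish with Theorem~\ref{thm:step}. One caution: your suggested patch for uniform-state diffusion does not work, because there $\pi_i$ conditions on the current token at $j$, and entanglement forces $q(x_i=a\mid x_j=b,\dots)=0$ while $p(x_i=a\mid x_j=b,\dots)=\pi_i(a)>0$, so what the scheduler reads genuinely differs in law --- a case the paper's proof also leaves uncovered.
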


\begin{proof}
Let $p$ make the positions of $R$ mutually independent with the marginals the scheduler observed, and
make $X_R$ independent of the positions outside $R$; the reveal of $R$ is then exact. Choose $a\neq b$
in $\operatorname{supp}(\pi_i)$ and $c\neq d$ in $\operatorname{supp}(\pi_j)$, write $p_{ij}$ for the
joint law of $(X_i,X_j)$ under $p$, and let $\varepsilon=\min\{p_{ij}(a,d),p_{ij}(b,c)\}>0$. Let $p'$
replace $p_{ij}$ by
\[
\begin{aligned}
p'_{ij}(a,c)&=p_{ij}(a,c)+\varepsilon, & p'_{ij}(b,d)&=p_{ij}(b,d)+\varepsilon,\\
p'_{ij}(a,d)&=p_{ij}(a,d)-\varepsilon, & p'_{ij}(b,c)&=p_{ij}(b,c)-\varepsilon,
\end{aligned}
\]
unchanged on the other value pairs, and leave the rest of $p$ as it is. Row and column sums are
preserved, so every per-position marginal is unchanged. One of $p'_{ij}(a,d)$, $p'_{ij}(b,c)$ is zero
while its two values remain individually admissible, so $R$ is entangled under $p'$. Since the $\pi_i$
coincide under $p$ and $p'$, so does the law of any quantity the scheduler computes from them or from
tokens drawn from them, including the confidence of a drawn token; and $X_R$ is independent of the other
positions under both, so those distributions coincide in every context at which the scheduler could still select an $R$ containing both. The choice of $R$ therefore has the same law under both, and by Theorem~\ref{thm:step} the step over $R$ is
exact under $p$ and not under $p'$.
\end{proof}


A sampler that can revisit a position it has already written --- a remasking or uniform-state sampler --- may try
to repair a step that left the support. In either family, the decision is made at a state to which $p$ assigns
probability zero: which positions to remask in the first case, what to write in the second. 

\begin{theorem}[After an error, the conditionals are either revealing or undefined]\label{thm:dich}
Let $x$ satisfy $p(x)=0$. Since $p(x)=p(x_{-i})\,p(x_i\mid x_{-i})$, at each position $i$ at least one
factor vanishes, so either
\begin{itemize}
\item[(a)] $p(x_{-i})>0$ and $p(x_i\mid x_{-i})=0$: position $i$ reports probability zero for its current token, so any confidence-based rewrite rule selects $i$; or \item[(b)] $p(x_{-i})=0$: the tokens outside $i$ are themselves impossible, and $p$ defines no conditional distribution $p(\cdot\mid x_{-i})$.
\end{itemize}
If (b) holds at every position, nothing the sampler reads at this state is determined by $p$.
\end{theorem}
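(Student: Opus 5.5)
The plan is to argue directly from the chain rule for $p$ at the single state $x$. Fix a position $i$. Whenever $p(x_{-i})>0$ the conditional $p(\cdot\mid x_{-i})$ is well defined, and we have the factorization $p(x)=p(x_{-i})\,p(x_i\mid x_{-i})$, where $p(x_{-i})=\sum_{v\in\mathcal V}p(x_{-i},v)$ is the marginal of the other $L-1$ positions. If $p(x_{-i})=0$ we are in case (b) by definition, and no conditional is defined because Bayes' rule would divide by zero. Otherwise $p(x_{-i})>0$, and $p(x)=0$ forces $p(x_i\mid x_{-i})=0$, which is case (a). The two cases are exhaustive, and they are exclusive because they disagree on the sign of $p(x_{-i})$.

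For the confidence remark in (a), I would invoke the standing assumption of Definition~\ref{def:step} and the paragraph after it: at a position where the conditional exists, $\pi_i$ is the exact conditional. The confidence of the current token is then $\pi_i(x_i)=p(x_i\mid x_{-i})=0$. This is the minimum possible value, so any rule that rewrites the positions of lowest current-token confidence selects $i$, possibly tied with other zero-confidence positions. I would state this as "ranked among the lowest" so that tie-breaking does not become an issue.

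For the final sentence, suppose (b) holds at every $i$. The network's input at any position it may write is a context $x_{\mathrm{given}}$. For the remasking and uniform-state families, this context is contained in, or equal to, $x_{-i}$ together with possibly stale tokens, and any event it conditions on refines an event of $p$-probability zero. So $p$ assigns probability zero to the conditioning event, and every conditional is undefined. Any value the network outputs is then consistent with $p$ being the population minimizer of the masked cross-entropy, because the loss never weights a context of probability zero. The scheduler's inputs are therefore unconstrained by $p$.

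The main obstacle is making this last step precise: "not determined by $p$" should mean that two distributions agreeing with $p$ on its support leave the minimizer free at this state. I would phrase it as the observation that the training objective puts zero weight on these contexts, rather than attempt a stronger formal statement.
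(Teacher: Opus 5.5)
Your proposal is correct and follows the paper's own argument. The factorization $p(x)=p(x_{-i})\,p(x_i\mid x_{-i})$ forces one factor to vanish at each $i$, and, as in the paragraph after the theorem, the final sentence rests on the masked objective never training on probability-zero contexts; your ``ranked among the lowest'' is a sound sharpening of the paper's ``selects $i$''. Two small corrections follow. First, the network's context must \emph{contain} $x_{-i}$, not be contained in it, for its event to refine $\{X_{-i}=x_{-i}\}$. Second, under uniform-state or token-substitution training the loss \emph{does} weight such contexts, so the accurate claim, as the paper states it, is that training there fixes a corruption posterior rather than any conditional of $p$.
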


Branch (b) is the direct result of writing an entangled group: the surrounding tokens at every position of the group are themselves impossible. The masked cross-entropy objective trains only on contexts that arise from corrupting valid samples; probability-zero contexts do not arise that way. Training with token-substitution corruption reaches some such states, but fixes the corruption posterior, not a conditional of $p$. A single wrong position falls under (a): the surrounding tokens are valid and the network reports zero for $i$'s current token and a low-probability rewrite rule selects it. The correlated multi-position errors an entangled group produces all fall under (b) at every position. Repair in branch (a) is also imperfect. Rewriting one member of an entangled pair reduces parallelism, and the redraw is exact only if the surviving member was already drawn from the correct conditional, which fails after any prior parallel step.

When a remasking or uniform-state sampler reaches a valid sequence with steps remaining, changing the distribution over valid sequences requires exchanging it for another. Where the data pins a position, writing that position alone leaves the support, and any group that could change it is entangled (Proposition~\ref{prop:frozen}): no such exchange is exact.

\begin{proposition}[Writing a pinned position either leaves the support or writes an entangled group]\label{prop:frozen}
Call position $i$ \emph{pinned} at $x$ if $p(\cdot\mid x_{-i})$ is a point mass at $x_i$, that is, if
the current tokens elsewhere admit no other value there. Let $x\in\operatorname{supp}(p)$ and let $i$ be
pinned at $x$.
\begin{itemize}
\item[(a)] Every state that differs from $x$ only at $i$ lies outside $\operatorname{supp}(p)$, since
$p(y)=p(x_{-i})\,p(y_i\mid x_{-i})=0$.
\item[(b)] Let a step write a set $R\ni i$. With the exact conditionals of
Definition~\ref{def:step}, the step can put a value other than $x_i$ at $i$ only if
$p(\cdot\mid x_{-R})$ is non-degenerate there, and in that case $R$ is entangled at $x_{-R}$: changing
$i$ and leaving the other positions of $R$ as they are is admissible at every coordinate and impossible
jointly, by (a). Theorem~\ref{thm:step} bounds the step's divergence, and it puts mass outside
$\operatorname{supp}(p)$.
\end{itemize}
An independent-update sampler therefore reaches a valid sequence differing from $x$ at a pinned
position only by leaving the support, where Theorem~\ref{thm:dich} governs what it reads next, or by
writing an entangled group, where Theorem~\ref{thm:step} applies. Drawing $x_R$ from the joint
conditional $p(x_R\mid x_{-R})$ would move the state exactly, but that is not a product and so not one
of its steps. This argument is about $p$ alone, so it holds whatever
$x_{\mathrm{given}}$ the family supplies to the network, uniform-state diffusion included: conditioning
on more of the current sequence, its own token at $i$ among it, cannot make a determined position
undetermined.
\end{proposition}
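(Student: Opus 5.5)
Part (a) comes directly from the chain rule. Since $x\in\operatorname{supp}(p)$, the marginal $p(x_{-i})\ge p(x)>0$, so the conditional $p(\cdot\mid x_{-i})$ is well defined. Because $i$ is pinned, this conditional is a point mass at $x_i$. Any $y$ with $y_{-i}=x_{-i}$ and $y_i\neq x_i$ therefore has $p(y)=p(x_{-i})\,p(y_i\mid x_{-i})=0$. Nothing more is needed there.

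For (b) I would take a step writing $R\ni i$ from the frozen tokens $x_{-R}$, with exact conditionals. The best case of Theorem~\ref{thm:step} gives $\pi_k=p(\cdot\mid x_{-R})$ restricted to coordinate $k$. The first claim is then immediate: the step can place $v\neq x_i$ at $i$ only if $p(X_i=v\mid x_{-R})>0$. Since $p(X_i=x_i\mid x_{-R})\ge p(x_R\mid x_{-R})>0$, the marginal at $i$ is non-degenerate.

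For entanglement I would exhibit the witness assignment $z_R$ defined by $z_i=v$ and $z_k=x_k$ for $k\in R\setminus\{i\}$. The steps are:
\begin{enumerate}
\item Check that every coordinate is admissible. At $i$ this holds by the choice of $v$. At $k\neq i$ it holds because $p(X_k=x_k\mid x_{-R})\ge p(x_R\mid x_{-R})>0$, using $x\in\operatorname{supp}(p)$.
\item Check that the joint is impossible. The full state $(x_{-R},z_R)$ differs from $x$ only at $i$, so by (a) it has probability zero. Since $p(x_{-R})>0$, this gives $p(z_R\mid x_{-R})=0$.
\end{enumerate}
That is exactly Definition~\ref{def:ent}. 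The remaining assertions of (b) are then Theorem~\ref{thm:step} applied to an entangled $R$. Entanglement implies dependence, so the divergence is at least $\mathrm{TC}>0$. The support clause of that theorem shows the product puts mass $\prod_k p(z_k\mid x_{-R})>0$ on $(x_{-R},z_R)\notin\operatorname{supp}(p)$.

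The one subtlety, and the step I expect to be delicate, is non-exact $\pi_k$, for instance uniform-state conditionals given stale tokens. I would handle it only as the statement does. The entanglement of $R$ is a property of $p$ alone, independent of $x_{\mathrm{given}}$. Theorem~\ref{thm:step} then gives the lower bound $\mathrm{TC}(X_R\mid x_{-R})$ for any $\pi$. The closing remark follows because conditioning further on $x_i$ keeps $p(\cdot\mid x_{-i},x_i)$ a point mass, so the network cannot report a different value at $i$.

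Finally, I would assemble the summary paragraph by a case split on the step that first changes position $i$, writing set $R$. If $R=\{i\}$, the new state leaves the support by (a), and Theorem~\ref{thm:dich} governs what the sampler reads next. If $|R|\ge2$, then $R$ is entangled by (b). The joint draw from $p(x_R\mid x_{-R})$ is excluded because it is not a product, which is again the content of Theorem~\ref{thm:step}.
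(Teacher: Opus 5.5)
Your proofs of (a) and (b) are correct and match the argument the paper gives inline. For (a) it is the chain rule at the pinned position. For (b) it is the witness that changes $i$ while keeping $R\setminus\{i\}$ at $x$: this witness is admissible coordinatewise because $p(x_R\mid x_{-R})>0$, and jointly impossible by (a).

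The gap is in your final assembly. Your case split on \emph{the step that first changes position $i$} applies (a) and (b) at the state just before that step. Both results need that state to be a valid sequence at which $i$ is pinned. Nothing guarantees this once earlier steps have written other positions, because writes elsewhere that stay in the support can unpin $i$. For example, let $p$ be uniform on $\{(0,0),(1,0),(1,1)\}$ and $x=(0,0)$. Position $2$ is pinned at $x$, but position $1$ is not. Two exact single-position steps take $(0,0)\to(1,0)\to(1,1)$, each with probability $1/2$. They never leave $\operatorname{supp}(p)$ and never write an entangled group, since a singleton cannot be entangled under Definition~\ref{def:ent}. Yet they end at a valid sequence that differs from $x$ at the pinned position.

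So (a) and (b) support only the one-step claim, about a step taken from $x$ itself, and that is how the paper's ``therefore'' has to be read. To get the multi-step version you need an extra hypothesis, for instance that $i$ is pinned at every valid sequence the sampler passes through before it first changes $i$. With that hypothesis your split goes through. Either some earlier step left the support or wrote an entangled group, or the step that first changes $i$ starts from a valid state where $i$ is still pinned, and (a) or (b) applies there. The hypothesis does hold where the paper uses the proposition, on ScanAndAdd's command block: each command is pinned at every valid sequence because the add count is fixed at two.
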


\begin{corollary}[Exactness requires independence the sampler cannot verify]\label{cor:main}
An independent-update sampler reproduces $p$ only if every group of two or more positions it writes is
conditionally independent given the frozen tokens; Appendix~\ref{app:onlyworse} shows when that follows
from Theorem~\ref{thm:step} step by step. Conditional independence is a property of the particular $p$
the model was trained on, and the sampler cannot verify it: by Theorem~\ref{thm:cert} any group with two
positions at which the distributions are non-degenerate is entangled under some distribution with
exactly the per-position marginals the scheduler observed, and by Theorem~\ref{thm:step} writing an
entangled group puts mass outside the support.
\end{corollary}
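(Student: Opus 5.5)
The corollary has two parts, and I would prove them separately. The first is the necessity claim: if the sampler reproduces $p$, then every group of two or more positions it writes is conditionally independent given the frozen tokens. The second is the non-verifiability claim. For the second I would simply combine Theorem~\ref{thm:cert} with the support statement of Theorem~\ref{thm:step}.

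\textbf{Necessity.} I would argue by contrapositive. Suppose that at some state reached with positive probability the scheduler selects a group $R$ with $|R|\ge2$, and that $R$ is not mutually independent under $p(\cdot\mid x_{-R})$. By Theorem~\ref{thm:step}, the law the step puts on $x_R$ differs from $p(x_R\mid x_{-R})$. The divergence is at least $\mathrm{TC}(X_R\mid x_{-R})>0$, whatever the $\pi_i$ are.

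The task is then to show that this one-step discrepancy survives to the final output. For an absorbing-state sampler that writes each position once, I would use the chain rule for KL along the trajectory. The divergence between the sampler's law on the final sequence and $p$ is bounded below by the sum, over steps, of the expected per-step divergences of the step law from the true joint conditional of the group. Each term is non-negative, so a single term carrying positive $\mathrm{TC}$ on a positive-probability state makes the total positive. This is the additivity the paper defers to Appendix~\ref{app:onlyworse}, and I would cite it for the cases it covers. This is why the statement says ``when that follows from Theorem~\ref{thm:step} step by step.''

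For revisiting samplers the same conclusion needs more care. If $R$ is also entangled, then by Theorem~\ref{thm:step} the step leaves the support. From there Theorem~\ref{thm:dich} and Proposition~\ref{prop:frozen} show that the sampler's subsequent reads are undetermined by $p$, or that the sampler must write further entangled groups. So no exact repair is available from within the class of independent-update steps.

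\textbf{Non-verifiability.} Fix any step at which the scheduler selects $R$ containing two positions whose $\pi_i,\pi_j$ are not point masses. Theorem~\ref{thm:cert} supplies $p$ and $p'$ with identical per-position marginals at every position. Under $p$ the group $R$ is independent given $x_{-R}$; under $p'$ it is entangled. The scheduler's choice of $R$ has the same law under both. Under $p'$, the best-case clause of Theorem~\ref{thm:step} shows the step assigns positive probability outside $\operatorname{supp}(p')$. No rule satisfying Definition~\ref{def:step} can therefore distinguish the exact case from the support-violating one.

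\textbf{Main obstacle.} The main obstacle is the accumulation step in the necessity part. One must rule out that a later step's error cancels an earlier one, since per-step KL terms compose cleanly only for write-once trajectories. I would first try the data-processing inequality applied to the map from the full trajectory to the final sequence. If that fails to give a strict lower bound in general, I would restrict the strict claim to the write-once setting, as the corollary's reference to Appendix~\ref{app:onlyworse} already signals.
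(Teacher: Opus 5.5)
Your non-verifiability half is the paper's argument. Theorem~\ref{thm:cert} supplies $p,p'$ with identical per-position marginals. Under $p'$, $X_R$ stays independent of $x_{-R}$, so the $\pi_i$ are exact conditional marginals there and the best-case support clause of Theorem~\ref{thm:step} applies.

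The gap is in the accumulation step of the necessity half. Writing each position once is not what makes per-step divergences add. What the paper uses (Theorem~\ref{thm:add}) is \emph{prefix-measurability}: $R_t$ is a function of the already-written tokens $x_{O_t}$ alone. That gives every final sequence $x$ a unique write path. Then $q(x)=\prod_t\prod_{i\in R_t}\pi_i(x_i\mid x_{O_t})$ and $p(x)=\prod_t p(x_{R_t}\mid x_{O_t})$ factor along the same path, and $D_{\mathrm{KL}}(p\,\|\,q)=\sum_t\delta_t$ holds with equality. A dependent group written at a prefix the path reaches with positive $p$-probability then makes $\delta_t>0$, since $\delta_t$ averages non-negative terms, one of which is at least that group's total correlation.

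Now take a write-once sampler that selects $R$ by the confidence $\pi_i(t_i)$ of a freshly drawn token and commits that token. This is the MaskGIT convention, which Appendix~\ref{app:onlyworse} lists as not prefix-measurable. There $q(x)$ is a sum over write paths, and the KL chain rule holds only on path space. The data-processing inequality you plan to try points the wrong way: it bounds the output divergence \emph{above} by the path divergence, never below.

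Cancellation genuinely occurs there. Take two binary positions with $p(0,0)=0.7$, $p(1,1)=0.3$ and exact marginals $\pi_i=\pi_j=(0.7,0.3)$. Draw $t_i,t_j$ and commit both if $\pi_i(t_i)=\pi_j(t_j)$. Otherwise commit $t_i$ alone and fill $j$ from the point mass $p(\cdot\mid x_i)$. The output puts $0.49+0.21=0.7$ on $(0,0)$ and $0.09+0.21=0.3$ on $(1,1)$, which is exactly $p$. Yet the entangled group $\{i,j\}$ is written with probability $0.58$. So your fallback of restricting the strict claim to write-once samplers still overclaims. The restriction has to be to prefix-measurable schedules, which is what the corollary's ``when'' refers to.

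Your paragraph on revisiting samplers does not establish the ``only if'' either. Theorem~\ref{thm:dich} and Proposition~\ref{prop:frozen} describe what $p$ does or does not determine at off-support states and at pinned positions. They say nothing about the law the sampler actually outputs. They are also silent when the dependent group has full support, since then no step leaves the support. The paper explicitly does not derive an output-level gap for remasking or uniform-state samplers (Section~\ref{sec:limitations}). That paragraph should therefore be presented as motivation, not as part of the proof.
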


\begin{corollary}[Overwriting does not recover the distribution]\label{cor:repair}
At the off-support states a parallel step produces, $p$ either reports probability zero for a single
wrong token, which any low-probability rewrite rule locates, or determines nothing at all, which is what
the correlated multi-position errors of an entangled group produce (Theorem~\ref{thm:dich}). And where
$p$ carries hard constraints, changing a position the rest of the sequence pins takes the sampler
outside the support unless it writes an entangled group, which Theorem~\ref{thm:step} then applies
(Proposition~\ref{prop:frozen}). On such a $p$, writing one position at a time cannot reach a different valid sequence, and writing multiple positions distorts the distribution.
\end{corollary}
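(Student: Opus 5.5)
The corollary has three claims, and the plan is to prove them in order from Theorem~\ref{thm:dich} and Proposition~\ref{prop:frozen}, with Theorem~\ref{thm:step} supplying the quantitative bound.

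\textbf{Off-support states.} Let a parallel step write a group $R$ and produce a state $x$ with $p(x)=0$. By Theorem~\ref{thm:dich}, each position $i$ falls into branch (a) or branch (b).

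\emph{Branch (a).} Here $p(x_{-i})>0$ and $p(x_i\mid x_{-i})=0$. The exact conditional at $i$ therefore assigns confidence zero to the current token, so any rule that rewrites the lowest-probability position selects $i$. This is the single-wrong-token case.

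\emph{Branch (b) at every position of $R$.} Suppose the step wrote an entangled group, so that $x_R$ is coordinatewise admissible but jointly impossible, and suppose at least two positions of $R$ differ from every jointly admissible completion. Take $i\in R$. Then $x_{-i}$ still contains the other offending coordinates of $R$. The plan is to show $p(x_{-i})=0$: if $p(x_{-i})>0$, some completion at $i$ would be admissible together with the remaining coordinates of $x_R$. That is what "correlated multi-position error" should mean, so I will make it precise as: no single-coordinate change of $x$ returns to $\operatorname{supp}(p)$. Under that definition branch (b) holds at every $i$. The final clause of Theorem~\ref{thm:dich} then says nothing the sampler reads is determined by $p$.

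\textbf{Pinned positions.} Let $x\in\operatorname{supp}(p)$ and let $i$ be pinned at $x$.
\begin{itemize}
\item By Proposition~\ref{prop:frozen}(a), changing $i$ alone leaves the support.
\item By Proposition~\ref{prop:frozen}(b), any step that can change $i$ writes a group $R\ni i$ that is entangled at $x_{-R}$.
\item Theorem~\ref{thm:step} then gives divergence at least $\mathrm{TC}(X_R\mid x_{-R})>0$ and positive mass outside the support.
\end{itemize}

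\textbf{The final sentence.} I will argue by induction over one-position steps. Suppose every valid sequence differing from $x$ has every differing position pinned along the path, meaning each single-position move from a valid state to another valid state is blocked. Then a path of single writes that stays in the support never changes any pinned coordinate, so it cannot reach a different valid sequence. The only alternative is a multi-position write, which by the previous paragraph is an entangled group and distorts the distribution.

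\textbf{Main obstacle.} The hard step is making "on such a $p$" precise. The statement needs a hypothesis that every position at which two valid sequences differ is pinned, for example a hard-constraint $p$ whose support has no two elements at Hamming distance one. I would adopt exactly that condition. Under it, single-position moves from a valid sequence always leave the support, so the conclusion follows from Proposition~\ref{prop:frozen}(a). A second gap is that an off-support excursion followed by a return could in principle reach another valid sequence. I would close it by noting that such a return is governed by Theorem~\ref{thm:dich} and is not exact.
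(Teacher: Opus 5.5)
Your first two parts match the paper, which proves this corollary only by combining Theorem~\ref{thm:dich} and Proposition~\ref{prop:frozen}. Your definition of a correlated multi-position error is exactly the paper's split between one wrong token and branch (b) everywhere: no single-coordinate change of $x$ returns to $\operatorname{supp}(p)$, equivalently $p(x_{-i})=0$ at every $i$.

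\textbf{The gap is in the last sentence.} You derive \emph{cannot reach} from Proposition~\ref{prop:frozen}(a). You then dispose of an off-support excursion by saying the return \emph{is governed by Theorem~\ref{thm:dich} and is not exact}. Being inexact does not stop a path from ending at another valid sequence, so this says nothing about reachability. The missing ingredient is exactness of the conditionals, which is Proposition~\ref{prop:frozen}(b) with $R=\{i\}$, not (a). From a valid $x$, a one-position step draws from $p(\cdot\mid x_{-i})$. This conditional is defined because $p(x_{-i})\ge p(x)>0$, and it charges only values $y_i$ with $(x_{-i},y_i)\in\operatorname{supp}(p)$. Under your Hamming-distance hypothesis it is therefore the point mass at $x_i$. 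Every one-position step returns $x$ unchanged, the excursion never begins, and induction over steps finishes the argument without Theorem~\ref{thm:dich}. Without exact conditionals the literal \emph{cannot reach} is not provable, since an approximate $\pi_i$ can leave the support and later land on another valid sequence. Only the hedged form at the end of Proposition~\ref{prop:frozen}, \emph{only by leaving the support}, survives.

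\textbf{Two further points about your hypothesis.} First, your global condition (no two support points at Hamming distance one) is stronger than the scope the paper intends. The paper means valid sequences differing \emph{at a pinned position}, as in the closing paragraph of Proposition~\ref{prop:frozen} and the Conclusion. Your condition also excludes ScanAndAdd, whose free value positions are unpinned. It is nonetheless the safe choice for a general $p$, because the local reading needs the pinned set to stay fixed along single-write paths. That holds on ScanAndAdd: exact one-position steps can change only free values, so they never touch the commands and answer that determine which positions are pinned. It fails in general. For $p$ uniform on $\{(0,0),(0,1),(1,1)\}$, the first position is pinned at $(0,0)$, yet $(0,0)\to(0,1)\to(1,1)$ by exact one-position steps that never leave the support.

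Second, say explicitly that \emph{one position at a time} means each draw conditions on every other position holding a token. A remasking sampler that masks a block $M$ and then fills it one position per step draws exactly from $p(x_M\mid x_{-M})$ by the chain rule. For $p$ uniform on $\{(0,0),(1,1)\}$, which satisfies your hypothesis, that sampler moves from $(0,0)$ to $(1,1)$ with probability $1/2$.
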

  
\section{Experimental Verification}
\label{sec:task}

Section~\ref{sec:claim} holds over any $p$. Checking it against one needs a distribution whose
conditionals, entropies, and support are all computable. That distribution comes from a synthetic task, ScanAndAdd: a read head sweeps once, left to right, over $n$ values $v_0,\dots,v_{n-1}\sim\mathrm{Uniform}\{0,\dots,V\}$, following $A+(n-1)$ commands: an \texttt{add} adds the value under the head to an accumulator, a \texttt{right} moves the
head one position right. Each sample holds exactly $A$ \texttt{add} and exactly $n-1$ \texttt{right}
commands in uniformly random order, and the answer is the final accumulator. A sample is serialized
into three marked fields,
\[
\texttt{VALS}\;\; v_0 \cdots v_{n-1} \;\;\; \texttt{ANS}\;\; a_2\,a_1\,a_0 \;\;\; \texttt{OPS}\;\; o_0 \cdots o_{A+n-2},
\]
with the answer in three base-$10$ digits. All experiments use $n=9$, $V=9$, $A=2$, giving $L=25$ over
a $15$-token vocabulary; commands are read by parity from token ids $\{0,\dots,M\}$, so $M=1$ gives one
id per command and $M=9$ gives five interchangeable ids for each
(Appendix~\ref{app:task}). \texttt{VALS} and \texttt{OPS} are independent roots and \texttt{ANS} is a
deterministic function of both. The distribution $p$ is uniform on its support of
$|S|=(V+1)^n\binom{A+n-1}{A}=4.5\times10^{10}$ sequences at $M=1$, so $H(p)=35.39$ bits.

Because $p$ is known, the per-position distributions a perfectly trained model reports at the
all-masked state are available in closed form. Table~\ref{tab:conf} lists them, derived in
Appendix~\ref{app:conf}. Figure~\ref{fig:setup} shows the generative process and its non-distributional metrics, all of which the task passes.

\begin{figure}[t]
\centering
\begin{minipage}[c]{0.50\textwidth}
\centering
\includegraphics[width=\linewidth]{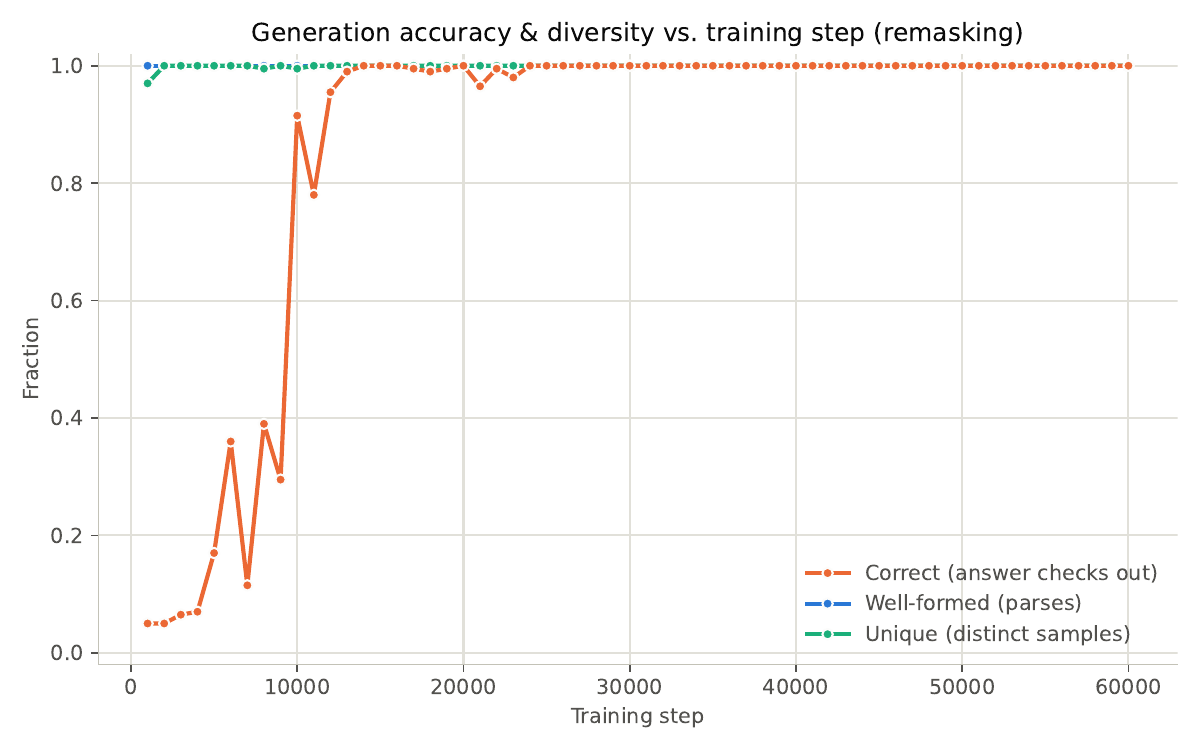}
\end{minipage}\hfill
\begin{minipage}[c]{0.46\textwidth}
\footnotesize
\begin{verbatim}
add_ops = arange(0, M+1, 2)
right_ops  = arange(1, M+1, 2)
vals = choice(arange(0, V+1), n)
ops = concatenate([
    choice(add_ops, A), 
    choice(right_ops, n-1)]) 
shuffle(ops)

total, head = 0, 0
for op in ops:
    if op in right_ops:
        head = (head + 1) % n
    else:                     
        total += vals[head] % m
\end{verbatim}
\end{minipage}
\caption{\textbf{Left:} well-formedness, correctness, and uniqueness for the training ($T=24$, $\tau=0.40$, $200$ samples). All three saturate at $1.0$ early in training. \textbf{Right:} the ScanAndAdd process, for
$n$ values in $[0,V]$, $A$ \texttt{add} commands, command token ids $\{0,\dots,M\}$, and value
modulus $m$. The values are drawn i.i.d.; the command list is a shuffle of $A$ \texttt{add}
and $n-1$ \texttt{right} ids, so positions are coupled;
\texttt{total} is a deterministic function of the other two fields.}
\label{fig:setup}
\end{figure}

\begin{table}[t]
\centering
\small
\caption{Maximum per-position probability $\max_v p(v\mid\varnothing)$ at the all-masked state, in
closed form ($n=9$, $V=9$, $A=2$). A rank-based scheduler writes positions top to bottom. Among
content positions only $a_2$ is determined, yet $a_1$ and, at $M=1$, the commands outrank every value
position.}
\label{tab:conf}
\begin{tabular}{llrr}
\toprule
\textbf{Position} & \textbf{Support} & $\max_v p(v\mid\varnothing)$ & $H(\cdot\mid\varnothing)$ (bits) \\
\midrule
$a_2$ (ANS hundreds) & $\{0\}$ & $1.00$ & $0.00$ \\
OPS position, $M=1$ & $\{0,1\}$ & $0.80$ & $0.72$ \\
$a_1$ (ANS tens) & $\{0,1\}$ & $0.54$ & $0.995$ \\
OPS position, $M=9$ & $\{0,\dots,9\}$ & $0.16$ & $3.04$ \\
$a_0$ (ANS units) & $\{0,\dots,9\}$ & $0.12$ & $3.29$ \\
$v_i$ (VALS position) & $\{0,\dots,9\}$ & $\mathbf{0.10}$ & $3.32$ \\
\bottomrule
\end{tabular}
\end{table}

\section{Applicability to ScanAndAdd}
\label{sec:instance}

Corollary~\ref{cor:main} requires the positions of every group a sampler writes to be mutually independent given the frozen tokens. This section shows that on ScanAndAdd no confidence ranking meets that requirement. Confidence at the all-masked state orders the positions as in Table~\ref{tab:conf}, lowest are the value positions and highest the field markers and $a_2$.

\emph{The command positions are dependent}, since exactly two of the ten are \texttt{add}:
$\mathrm{TC}=0.0099$ bits for a pair and $1.727$ bits for the set. Three or more are also entangled,
because each of three positions can hold \texttt{add} on its own while all three cannot; drawing all ten
independently gives a block with an \texttt{add} count other than two with probability $0.698$.  \emph{Both non-degenerate answer digits are dependent} at $\mathrm{TC}=0.189$ bits and entangled, since
$a_1=1$ and $a_0=9$ each occur but never together: that would record $19$, which is above the largest
attainable answer of $18$. \emph{An answer digit together with a value or command position it affects are entangled}, because the recorded answer is a deterministic function of values and commands, so most combinations that are separately possible are jointly impossible. \emph{All value positions are mutually independent as long as no answer digit has been written}. The value positions (the free group) are the only undetermined ones in the task that a step may write together.

\paragraph{How each diffusion family fails.} The failure of each family follows from one of the dependencies above; details are in Appendix~\ref{app:algorithms}.
\begin{itemize}
\item \emph{Masked reveal} (Appendix~\ref{app:masked}). The free group carries the lowest confidence in
the sequence, so a ranking reaches it last --- after the recorded answer has made two of its positions
dependent. On $72\%$ of samples those two then
outrank the seven that remain free, so they are what the next parallel step writes. Until then, every
group of two or more undetermined positions is drawn from the command block and the answer digits, and
those have non-zero cost.
\item \emph{Remasking} (Appendix~\ref{app:remask}). Overwriting requires a per-position conditional,
which exists only where the surrounding tokens are themselves producible. A parallel write of the
command block fails that condition at every position at once with probability $0.228$, which puts the
sampler in branch (b) of Theorem~\ref{thm:dich}: the ranking that selects what to overwrite is then
not determined by $p$.
\item \emph{Uniform-state diffusion} (Appendix~\ref{app:uniform}). For valid sequences, every
position except the value positions the head does not read is pinned by the others, so by
Proposition~\ref{prop:frozen} changing the command arrangement either leaves the support or writes an
entangled group. Its terminal phase has no exact route from one arrangement to another.
\end{itemize}

Every step of a confidence-ordered sampler on ScanAndAdd that writes two or more undetermined positions
is therefore inexact: the three markers and $a_2$ are the only positions it may write together at no
cost. The steps that write three or more command positions, both answer digits, or the summed pair leave
the support. 

What Corollary~\ref{cor:main} leaves open is a $p$ on which the groups a sampler happens to write are
conditionally independent. ScanAndAdd has one, the free group, but a confidence ranking reaches it last,
so the exception is unavailable to any confidence ranking in the three families. A masked sampler that writes one position
per step in an order fixed in advance is outside the claim: it is a chain-rule factorization of $p$, and
exact, at $L$ steps, but does not achieve diffusion's key promise of generation with fewer steps.
Appendix~\ref{app:trace} confirms that a trained model orders its writes as Table~\ref{tab:conf} predicts, and Appendix~\ref{app:schedule} works out how much parallelism the one exception permits here.

\section{Measuring Distortion on a Trained Model}
\label{sec:measured}

The model is a $6$-layer bidirectional Transformer encoder \citep{vaswani2017attention},
$d_{\text{model}}=256$, trained $60k$ steps on the $9\times10^6$ training sequences of a $10^7$-sample
dataset with a masked-language-modeling objective that draws a fresh corruption ratio per example. The training with
batch size $128$ gives $0.85$ epochs. Every point in Figure~\ref{fig:frontier} uses the same fully-trained model and $2k$ generated samples.

We report the total variation distance between the pooled tokens (TVT) of generated and training
samples, versus the TVT of a perfect sampler, $0.0054$ computed from independent draws of training data. The scatter of fixed write orders uses $\tau=1.0$; the overlaid lines sweep step count at five temperatures. With $2k$ samples the standard error of a correctness estimate is at most $0.011$.

Group~A in Figure~\ref{fig:frontier} is hand-specified orders sampling values (the free group) in parallel, commands one at a time, and answers last, to reach $\mathrm{TVT}$  $0.0050$ to $0.0066$ at
correctness $0.987$ to $0.996$ with bootstrap $p$-values from $0.18$ to $0.57$: at this sample size a test cannot separate their token distributions from a fresh draw of training data. These orders are not available to a
confidence-ordered sampler, which by Table~\ref{tab:conf} writes the values last; they measure an
achievable frontier, not a decoding rule.

\paragraph{Confidence-ordered remasking with the same model is imperfect.} The best TV at
each temperature is $0.129$, $0.115$, $0.0851$, $0.0536$ and $0.0352$ for $\tau=0.5$ through
$\tau=1.5$, reached at correctness $0.124$, $0.154$, $0.973$, $0.914$ and $0.770$ respectively. The
setting that comes closest to the training distribution is not solving the task. At
correctness $0.987$ or above --- the range the hand-specified orders occupy --- no
setting does better than $\mathrm{TV}=0.129$, which is $24\times$ the floor. The two settings that
reach correctness $1.000$ and $0.9995$ both sit at $\mathrm{TV}\approx0.157$. At
that first setting ($\tau=0.5$, $T=30$) the model is $100\%$ well-formed, $100\%$ correct and $100\%$
unique on $2k$ samples, and all those samples are inside the support of the training
distribution, while their pooled token distribution is $29\times$ TV noise floor. The
distance between the best-TV fixed order and the best-correctness confidence setting is $1.3$ points of
correctness and $32\times$ in TV: correctness does not order these samplers.

\paragraph{Moving one field, with everything else identical, changes correctness by up to
$8.4\times$.} To isolate the effect of where the recorded answer is written, take pairs of orders that
treat the value and command positions identically, with the same grouping and the same parallelism, and
differ only in where the three answer digits fall (Table~\ref{tab:matched}). Writing the answer before
the values makes the summed value pair dependent: knowing the answer removes $3.61$ of the $5.49$ bits of entropy in the command block. Under an answer-last order every partial state can be completed, so an error yields a valid sequence from the wrong part of the distribution; under an answer-first order a partial state can admit no valid completion. Which branch of Theorem~\ref{thm:dich} such a state falls in depends on how much of the command block is wrong; Appendix~\ref{app:remask} works out the case that
falls in branch (b).

\begin{table}[ht]
\centering
\small
\caption{Matched pairs of write orders. The value and command positions are treated the same within rows; only the answer digits move. Brackets mark
positions written in one step; unbracketed fields are written one position per step chosen
by confidence.}
\label{tab:matched}
\begin{tabular}{lr|lr|r}
\toprule
\textbf{ANS last} & \textbf{acc.} & \textbf{ANS in the middle} & \textbf{acc.} & \textbf{ratio} \\
\midrule
\texttt{[VOA][v$\times$9]o$\times$10[ht1]} & $0.987$ & \texttt{[VOA][v$\times$9][ht1]o$\times$10} & $0.118$ & $8.4\times$ \\
\texttt{[VOA][v$\times$9]o$\times$10ht1}   & $0.996$ & \texttt{[VOA][v$\times$9]ht1o$\times$10}   & $0.157$ & $6.3\times$ \\
\texttt{[VOA]v$\times$9o$\times$10ht1}     & $0.989$ & \texttt{[VOA]v$\times$9ht1o$\times$10}     & $0.209$ & $4.7\times$ \\
\texttt{[VOA]v$\times$9o$\times$10[ht1]}   & $0.987$ & \texttt{[VOA]v$\times$9[ht1]o$\times$10}   & $0.142$ & $7.0\times$ \\
\texttt{[VOA][v$\times$9][o$\times$10][ht1]} & $0.330$ & \texttt{[VOA][v$\times$9][ht1][o$\times$10]} & $0.052$ & $6.4\times$ \\
\texttt{[VOA]v$\times$9[o$\times$10][ht1]} & $0.345$ & \texttt{[VOA]v$\times$9[ht1][o$\times$10]} & $0.057$ & $6.0\times$ \\
\bottomrule
\end{tabular}
\end{table}

\paragraph{Mass outside the support is measured directly.} Manual sampling order is notated: markers by \texttt{VOA}, values by \texttt{v}, commands by \texttt{o}, and answer digits by \texttt{ht1} (for hundreds, tens, and ones); brackets show parallel sampling. For orders \texttt{[VOA][v$\times$9][o$\times$10][ht1]} and \texttt{[VOA][v$\times$9][ht1][o$\times$10]}, which write the ten command positions in one step, the fraction of generated samples whose command block has an \texttt{add} count other than two is $0.703$ and $0.707$, against the $0.698$ that
Section~\ref{sec:instance} predicts. Those samples have probability zero under $p$, while being well-formed and correct.

\section{Limitations}
\label{sec:limitations}

\emph{Theorem~\ref{thm:step} bounds one step's divergence, and the argument from there to the output distribution is complete for the absorbing family and open for the others.} Appendix~\ref{app:onlyworse} closes it whenever the set written at each step is a function of the tokens already written --- a schedule fixed in advance, or any ranking computed from the per-position distributions themselves: the per-step divergences then add, none can be negative, and the run is at least as wrong as its worst step. What it does not cover is a dependent group formed at a later step of a ranking, where the group and the context are both random --- conditional on reaching a context, which positions are selected and which values are drawn into them are dependent random variables, and we do not rule out cancellation across reveal paths. For remasking and uniform-state samplers, where a position can be written more than once, we show that the decision to overwrite is unfounded (Theorem~\ref{thm:dich}) and that changing a pinned position costs either the support or a dependent group (Proposition~\ref{prop:frozen}), which Corollary~\ref{cor:repair} combines; we do not derive a quantitative gap between their output law and $p$.

\emph{Theorem~\ref{thm:cert} rules out certification, not correctness on a particular distribution.}
It says that no sampler reading per-position distributions can establish that a group is safe. A given
$p$ may still be one on which the groups a sampler happens to write are independent.
Section~\ref{sec:instance} closes that gap for ScanAndAdd by exhibiting the dependencies directly, and
we do not close it for any natural corpus.

\emph{The uniform-state claims are read at the terminal phase and are not measured.} We analyze the
phase in which the frozen tokens are clean, because that is where $p(x_i\mid x_{-i})$ is the quantity a
perfect denoiser supplies. We do not analyze the high-noise phase, so the statement that the emitted
\texttt{OPS} distribution is whatever that phase left is a conjecture and not a result. No
uniform-state model is trained here: every measurement in Section~\ref{sec:measured} is an
absorbing-state or remasking sampler, and all of them are the $M=1$ condition.

\emph{The size of the distortion is measured on one synthetic task.} ScanAndAdd was built so that
every quantity is computable. What does transfer is the hypothesis of
Theorem~\ref{thm:step}, a group holding a dependent pair, and the one measurement that needs no access
to the joint: counting generated samples that violate a hard constraint known to hold in the data.
While findings in the community make compatible observations at large scale \citep{ni2026flexibility, zhang2026generationorderparalleldecoding} and propose methods for independence in parallel sampling \citep{azangulov2025parallelsamplingmaskeddiffusion, ringel2026dependencyguidedparalleldecodingdiscrete}, this work uses synthetic data and research models that do not duplicate the architectures, decoding strategies, or deployment of any product model.

\emph{A scheduler trained on the domain is not ruled out, and is a plausible response to these
results.} Theorem~\ref{thm:cert} assumes the choice of $R$ depends on $p$ only through the $\pi_i$. A
learned scheduler breaks that assumption, because its parameters carry information about $p$ that those
distributions do not. It still faces Theorem~\ref{thm:step}: its steps are products, so it must avoid
dependent groups, and it cannot exceed the parallelism the data itself permits. On ScanAndAdd, any exact
schedule must write the undetermined command positions one per step, since any two of them are
dependent, and by exchangeability the number of command positions still undetermined when written has
the same law under any order that does not read the drawn tokens: $64/9\approx7.1$ on average and $9$
at worst. No exact schedule for this task therefore averages fewer than about $8$ steps against
$L=25$. The expected speedup is capped near
$3\times$, against the $1.31\times$ that the marginal-reading rule of
Appendix~\ref{app:schedule} attains. Such a scheduler would also have to be trained for
distributional match, which is computable here and only a proxy on a corpus.

\section{Related Work}
\label{sec:related}

\textbf{The failure has a direct precedent in non-autoregressive translation.} \citet{gu2018nat}
named the multimodality problem: a model emitting tokens independently cannot represent a multimodal
output distribution, and produces token-level blends of distinct valid outputs. Mask-Predict
\citep{ghazvininejad2019maskpredict} introduced the confidence-ordered refinement that masked diffusion
samplers inherit. Theorem~\ref{thm:step} is that observation stated as an impossibility, under exact
per-position predictions rather than as a symptom of imperfect training, and
Theorems~\ref{thm:cert}~and~\ref{thm:dich} extend it to the choice of what to write and what to
overwrite.

\textbf{The diffusion objective has been refined repeatedly; the grouping rule has not.} 
D3PM \citep{austin2021d3pm} set out the framework, MaskGIT \citep{chang2022maskgit}
introduced confidence-based parallel unmasking, simplified masked diffusion language models
\citep{sahoo2024mdlm} and score-entropy formulations \citep{lou2024sedd} sharpened the objective, and
ReMDM \citep{wang2025remdm} added inference-time remasking. Evaluation in this line relies on
perplexity, sample quality, and downstream metrics. Theorem~\ref{thm:step} applies to all of these
samplers at once, since each draws the positions it writes independently from per-position
distributions, and Definition~\ref{def:step} is what makes that one statement rather than three.
\citet{zhang2026generationorderparalleldecoding} give an information-theoretic account of the same
  grouping cost and its aggregation across blocks; our contribution is Theorem~\ref{thm:cert}, that no
  scheduler reading per-position distributions can determine whether the positions it selects are conditionally independent.

\textbf{An independent measurement of the ordering effect on language models.}
  \citet{ni2026flexibility} compare confidence-ordered decoding against a fixed left-to-right order on
  three diffusion language models and four reasoning benchmarks, and find the confidence order covers a
  smaller subset of the solution space: on HumanEval (LLaDA-Instruct, Pass@$1024$) $21.3\%$ of problems
  are solved only under the fixed order against $0.6\%$ only under the confidence order. Their ordering
  experiments decode one token per step, so the cost of Theorem~\ref{thm:step} is not active there and
  what they measure is premature commitment rather than grouping.

\textbf{The method follows work on checkable algorithmic tasks} used to probe transformer computation
\citep{lee2023teaching,zhou2022algorithmic}.

\section{Conclusion}
\label{sec:conclusion}

Parallel decoding requires choosing which token positions to write together, and that choice is exact
only when the chosen positions are conditionally independent given what is already fixed. Independence
is a property of the joint distribution, while what a sampler reads is one distribution per position, so
a sampler that chooses its groups from those distributions can certify none beyond the positions its
context has already determined. Remasking and uniform-state samplers inherit that limit rather than lifting it: they must either overwrite at states where the training objective stops constraining the network, or leave pinned positions unchanged, making the valid sequences that differ there unreachable. 


Two consequences are notable: 1) sample validity is not evidence of distribution matching. Where the joint
is computable, measure against it with a stated noise floor and a $p$-value; where it is not, the same
distortion should be expected and is going unmeasured. One part of it can be measured without knowing
the joint at all: a step that writes an entangled group leaves the support, so counting the generated
samples that violate a hard constraint known to hold in the data --- a fixed count, a checksum, a
bracket match --- measures the effect directly, as Section~\ref{sec:measured} does here. 2) the
grouping decision needs the data's dependency structure, which no collection of per-position
distributions supplies. Beyond the positions its context has already determined, a sampler's
parallelism has to be justified by the format the data is written in rather than by the model's
confidence at any position.
The grouping decision is not the only departure from exact sampling, and
Theorem~\ref{thm:step} bounds the others in the same terms. Appendix~\ref{app:interventions} places
other diffusion interventions (such as temperature, top-k, min-p, and search) in context and points out that many common techniques also shift the generated distribution away from the training distribution.

\bibliographystyle{plainnat}

\appendix

\section{ScanAndAdd Task, Data, and Training Details}
\label{app:task}

\paragraph{Vocabulary and encoding.}
Token ids $0$--$9$ are literal base-10 digits. Special tokens: \texttt{VALS} $=10$,
\texttt{OPS} $=11$, \texttt{ANS} $=12$, \texttt{PAD} $=13$, \texttt{MASK} $=14$.
Vocabulary size $= 15$. Total sequence length $L = 2n + A + 5 = 25$.

Each command token $o \in \{0, \ldots, M\}$ is read as \texttt{add} if $o$ is even,
\texttt{right} if odd. Answer digits are base-10 with the hundreds, tens, and ones 
places being $a_2, a_1, a_0$ respectively.

\begin{table}[ht]
\centering
\small
\caption{Sequence layout for $n=9$, $A=2$ ($L=25$). The ten command tokens fill
positions $15$--$24$ with no trailing \texttt{PAD}.}
\label{tab:layout}
\begin{tabular}{lll}
\toprule
\textbf{Positions} & \textbf{Field} & \textbf{Contents} \\
\midrule
$0$        & marker        & \texttt{VALS} \\
$1$--$9$   & values        & $v_0, \dots, v_8$ \\
$10$       & marker        & \texttt{ANS} \\
$11$--$13$ & answer digits & $a_2, a_1, a_0$ \\
$14$       & marker        & \texttt{OPS} \\
$15$--$24$ & commands      & $o_0, \dots, o_9$ \\
\bottomrule
\end{tabular}
\end{table}

\paragraph{The sample-level evaluation metrics are described below:}
\begin{itemize}
  \item \textbf{Well-formedness:} the sequence contains exactly one each of
        \texttt{VALS}, \texttt{ANS}, \texttt{OPS} in that order, exactly three
        tokens between \texttt{ANS} and \texttt{OPS}, and every field token in
        its declared range.
  \item \textbf{Correctness:} additionally, the accumulator recomputed from the
        decoded values and commands equals the decoded answer.
  \item \textbf{Neither metric checks the add-command count.} Both can read $1.00$
        on samples whose command block has the wrong number of adds.
\end{itemize}

\paragraph{Worked example ($M = 1$).}
\[
  \mathbf{v} = (4, 9, 6, 3, 3, 7, 7, 9, 7), \qquad
  \mathbf{o} = (0, 1, 1, 1, 0, 1, 1, 1, 1, 1).
\]
Command positions $1$ and $5$ are even (\texttt{add}); the rest are \texttt{right}.
The head starts at position $0$, reads $v_0 = 4$, moves right three times, reads
$v_3 = 3$: answer $= 7$. This sample in serialized form is
\[
  \underbrace{10}_{\texttt{VALS}}\;
  4\; 9\; 6\; 3\; 3\; 7\; 7\; 9\; 7\;\;
  \underbrace{12}_{\texttt{ANS}}\;
  0\; 0\; 7\;\;
  \underbrace{11}_{\texttt{OPS}}\;
  0\; 1\; 1\; 1\; 0\; 1\; 1\; 1\; 1\; 1.
\]
The answer is always below $100$, so the hundreds digit $a_2 = 0$ in every sample.
This is why $a_2$ carries confidence $1.00$.

\begin{table}[h!]
\centering
\small
\caption{Data parameters. The two conditions differ only in $M$.}
\label{tab:params}
\begin{tabular}{llrr}
\toprule
\textbf{Symbol} & \textbf{Config key} & \textbf{$M=1$} & \textbf{$M=9$} \\
\midrule
$n$ (values)          & \texttt{num\_vals}      & $9$    & $9$    \\
$V$ (max value)       & \texttt{max\_val}       & $9$    & $9$    \\
$m$ (value modulus)   & \texttt{val\_mod}       & $10$   & $10$   \\
$A$ (add commands)    & \texttt{num\_add}       & $2$    & $2$    \\
$M$ (max command id)  & \texttt{max\_op}        & $1$    & $9$    \\
Command multiplicity  & ---                     & $1$    & $5$    \\
Sequence length $L$   & \texttt{sample\_length} & $25$   & $25$   \\
Vocabulary size       & ---                     & $15$   & $15$   \\
Training examples     & \texttt{num\_samples}   & $10^7$ & $10^7$ \\
\bottomrule
\end{tabular}
\end{table}

\begin{table}[ht]
\centering
\small
\caption{Training and generation hyperparameters (both conditions, except
\texttt{max\_op}). Parameter count covers the full model; sinusoidal encoding
contributes none. Error samples are the fraction eligible for random-token
corruption.}
\label{tab:appendix}
\begin{tabularx}{\linewidth}{llX}
\toprule
\textbf{Group} & \textbf{Parameter} & \textbf{Value} \\
\midrule
\multirow{2}{*}{Data}
 & Training examples          & $10$M \\
 & Train / validation split   & $0.9 / 0.1$ \\
\midrule
\multirow{7}{*}{Model}
 & Positional encoding        & sinusoidal \\
 & $d_{\text{model}}$         & $256$ \\
 & Attention heads            & $8$ \\
 & Layers                     & $6$ \\
 & Feed-forward dim           & $1024$ \\
 & Dropout                    & $0.0$ \\
 & Trainable parameters       & $4.75$M \\
\midrule
\multirow{7}{*}{Optimizer}
 & Optimizer                  & AdamW \\
 & Peak learning rate         & $1.5 \times 10^{-4}$ \\
 & $(\beta_1, \beta_2)$       & $(0.9, 0.95)$ \\
 & Weight decay               & $0.001$ \\
 & Warmup steps               & $250$ \\
 & LR schedule                & cosine, $0.1\times$ peak \\
 & Gradient clip (norm)       & $1.0$ \\
\midrule
\multirow{4}{*}{Training}
 & Steps                      & $60k$ \\
 & Batch size                 & $128$ \\
 & Checkpoint interval        & $1k$ steps \\
 & Checkpoint evaluated       & step $60k$ \\
\midrule
\multirow{6}{*}{Corruption}
 & Corruption ratio $r$       & $\mathrm{Uniform}[0,1]$ \\
 & Fraction $\to$ \texttt{MASK}             & $0.8$ \\
 & Fraction $\to$ random token              & $0.1$ \\
 & Fraction left unchanged                  & $0.1$ \\
 & Error-sample probability                 & $0.5$ \\
 & Forced fully-masked sample prob.         & $0.1$ \\
\midrule
\multirow{5}{*}{Generation}
 & Diffusion steps $T$        & $24$ (default) \\
 & Mask schedule              & linear \\
 & Temperature $\tau$         & $0.40$ \\
 & Top-$k$ / top-$p$          & unrestricted \\
 & Samples per evaluation     & $200$--$2k$ \\
\bottomrule
\end{tabularx}
\end{table}
\FloatBarrier

\section{Closed-Form Per-Position Distributions}
\label{app:conf}

Because $p$ is known, the per-position distributions a perfectly trained model
reports at the all-masked state are computable. This section derives the entries
in Table~\ref{tab:conf}. Parameters: $n = 9$ values drawn from $\{0, \ldots, 9\}$, $A = 2$
add commands, largest possible answer $AV = 18$.

\paragraph{Step 1: place the two add commands.}
Put the two \texttt{add} commands at positions $i < j$ among the $A + n - 1 = 10$
command positions. There are $\binom{10}{2} = 45$ equally likely pairs. The head
reads $v_i$ and $v_{j-1}$, so $\mathrm{ANS} = v_i + v_{j-1}$. The $9$ pairs
with $j = i + 1$ read one value twice (the head does not advance before the
second add).

\paragraph{Answer hundreds digit $a_2$.}
The largest answer is $18 < 100$, so $a_2 = 0$ in every sample.
\[
  \max_v p(a_2 = v \mid \varnothing) = 1.00, \qquad H(a_2 \mid \varnothing) = 0.
\]

\paragraph{Answer tens digit $a_1$.}
\[
  P(\mathrm{ANS} \geq 10)
  = \underbrace{\tfrac{9}{45} \cdot \tfrac{1}{2}}_{\text{same value twice}}
  + \underbrace{\tfrac{36}{45} \cdot \tfrac{45}{100}}_{\text{two distinct values}}
  = 0.46,
\]
so $a_1 = 1$ with probability $0.46$ and $a_1 = 0$ with probability $0.54$.
\[
  \max_v p(a_1 \mid \varnothing) = 0.54, \qquad H(a_1 \mid \varnothing) = 0.995 \text{ bits.}
\]

\paragraph{Answer units digit $a_0$.}
Computing $P(a_0 = d)$ over all 45 position pairs and 100 value pairs gives
$P(a_0 = d) = 0.12$ for even $d$ and $0.08$ for odd $d$.
\[
  \max_v p(a_0 \mid \varnothing) = 0.12, \qquad H(a_0 \mid \varnothing) = 3.29 \text{ bits.}
\]

\paragraph{Value positions (VALS).}
Each value is drawn i.i.d.\ from $\{0, \ldots, 9\}$, independent of everything else.
\[
  \max_v p(v_i \mid \varnothing) = 0.10, \qquad H(v_i \mid \varnothing) = \log_2 10 = 3.32 \text{ bits.}
\]

\paragraph{Command positions (OPS).}
Each command is \texttt{add} with probability $2/10 = 0.20$ and \texttt{right}
with probability $0.80$.
\begin{itemize}
  \item $M = 1$ (one token id per command): $\max_v p = 0.80$, $H = 0.72$ bits.
  \item $M = 9$ (five synonym ids per command): each id has probability $0.80/5 = 0.16$
        (\texttt{right}) or $0.20/5 = 0.04$ (\texttt{add}).
        $\max_v p = 0.16$, $H = 3.04$ bits.
\end{itemize}

\paragraph{Total correlation of the command block.}
The synonym choice is independent of everything else and cancels in the TC:
\[
  \mathrm{TC}(X_C) = 10\,H(0.2) - \log_2 45 = 7.219 - 5.492 = 1.727 \text{ bits (either } M\text{).}
\]

\paragraph{Total correlation of the two non-degenerate answer digits.}
\[
  \mathrm{TC}(a_1, a_0) = H(a_1) + H(a_0) - H(\mathrm{ANS})
  = 0.995 + 3.293 - 4.099 = 0.189 \text{ bits.}
\]

\paragraph{Rank reflects encoding, not dependence.}
Changing $M$ from $1$ to $9$ (replacing each command id with five synonyms)
leaves the task and its dependency structure unchanged: the command block's
$\mathrm{TC}$ stays at $1.727$ bits. What changes is each command's confidence
($0.80 \to 0.16$), dropping commands below the tens digit in the ranking.
The write order changes; the dependencies do not. A coupled block can therefore
be moved up or down the confidence ranking by a change of representation that
leaves the distribution unchanged.

\section{How Each Diffusion Family Fails on ScanAndAdd}
\label{app:algorithms}

The failure of each family follows from one dependency identified in Section~\ref{sec:instance}.
Details for each family are below; Appendix~\ref{app:conf} derives the confidence
values used.

\subsection{Masked Reveal}
\label{app:masked}

\textbf{Key fact.} Value positions have the lowest confidence in the sequence.
A confidence ranking writes them last, by which time two of them are dependent.

\paragraph{Phase 1: Writing ANS and OPS.}
At the all-masked state, Table~\ref{tab:conf} gives value positions confidence $0.10$, lower
than every other content position. So a confidence ranking writes \texttt{ANS} and
\texttt{OPS} first.

During this phase, every group of two or more undetermined non-marker positions
consists of command positions, answer digits, or both. All of these are dependent
(Section~\ref{sec:instance}), so every parallel step in this phase is inexact.

\paragraph{Phase 2: Writing the value positions.}
Once \texttt{ANS} and \texttt{OPS} are fully written, the nine value positions remain.
Two of them, the \emph{summed pair}, now have elevated confidence:

\begin{itemize}
  \item The recorded answer fixes the sum $s$ of the two values the head reads.
        Each member of the summed pair is then uniform on $k(s) := 10 - |s - 9|$
        values, giving it confidence $1/k(s)$.
  \item $1/k(s) > 0.10$ whenever $s \neq 9$, so both members of the summed pair
        outrank the seven remaining free value positions.
  \item This happens with probability $\tfrac{36}{45} \cdot \tfrac{9}{10} = 0.72$
        (non-adjacent add positions, which give a distinct summed pair, times
        $P(s \neq 9)$).
\end{itemize}

In those cases, the ranking writes the summed pair as the first value-position
parallel step. The summed pair will be one of the following:
\begin{itemize}
  \item \textbf{Dependent:} $\mathrm{TC} = \log_2 k(s)$, up to $3.17$ bits.
  \item \textbf{Entangled:} drawing both members independently from their marginals
        produces the correct sum $s$ with probability only $1/k(s)$, so most
        such steps record an inconsistent answer.
\end{itemize}

\paragraph{Why the sampler cannot avoid this.}
A sampler could write each member of the summed pair in a separate step of size one
and then write the seven free positions together, which would be exact. But
identifying which positions form the summed pair requires knowing where the two
\texttt{add} commands fell, which depends on the sample. Between two and nine
command positions must be revealed before the summed pair is identifiable. Step
sizes fixed in advance cannot always reserve a size-one step at the right moment.

\subsection{Remasking}
\label{app:remask}

\textbf{Key fact.} When a parallel step writes the command block and produces
an invalid add-count, the conditional $p(\cdot \mid x_{-i})$ is undefined at
every position. The remasking rule has nothing valid to read.

\paragraph{When this occurs.}
Ten command positions drawn independently at $P(\texttt{add}) = 0.2$ produce an
add-count of $0$ or $\geq 4$ with probability $0.228$. At either count, branch~(b)
of Theorem~\ref{thm:dich} holds at every position:

\begin{itemize}
  \item \textbf{Count = 0} (no adds): delete any position $i$ from the sequence.
        The remaining nine commands hold zero adds. A valid sequence needs two
        adds total, so even adding an \texttt{add} at position $i$ gives only
        one --- not enough. No valid sequence has this context: $p(x_{-i}) = 0$.

  \item \textbf{Count $\geq 4$}: delete any position $i$. The remaining nine
        hold $\geq 3$ adds. No valid sequence permits more than two adds.
        Again $p(x_{-i}) = 0$.
\end{itemize}

Deleting a value or answer position instead leaves the invalid command block
intact, so $p(x_{-i}) = 0$ at those positions too. The conditional
$p(\cdot \mid x_{-i})$ does not exist anywhere, and whatever the remasking
rule outputs is not determined by $p$.

\paragraph{Other add-counts (1 or 3).}
At count $= 1$ or $3$, branch~(a) applies at some positions: the network reports
zero probability for the token at those positions, identifying them for rewriting.
However, the redraw conditions on the remainder of a command block whose law is
already wrong from the same parallel step.

\subsection{Uniform-State Diffusion}
\label{app:uniform}

\textbf{Key fact.} At every valid ScanAndAdd sequence, every position except the
unread value positions is pinned by the others. Proposition~\ref{prop:frozen} then applies:
writing a pinned position alone leaves the support, and any group that could
change it is entangled.

\paragraph{Which positions are pinned at a valid sequence $x$.}

\begin{center}
\small
\begin{tabular}{ll}
\toprule
\textbf{Position type} & \textbf{Why pinned} \\
\midrule
Command positions & add-count is fixed at two; the other nine determine this one \\
Answer digits ($a_2, a_1, a_0$) & deterministic function of values and commands \\
Markers & constant \\
Summed pair & each member pinned by the recorded answer and the other member \\
\midrule
Free value positions & \textit{not pinned} --- each uniform on ten values \\
\bottomrule
\end{tabular}
\end{center}

\paragraph{What Proposition~\ref{prop:frozen} says.}
At every pinned position, writing that position alone puts the state outside
$\operatorname{supp}(p)$; writing any group that could change it makes that
group entangled, which Theorem~\ref{thm:step} then bounds.

For the command block: moving from one arrangement to another requires changing
at least two command positions at once (the add-count is fixed, so replacing an
\texttt{add} requires adding an \texttt{add} elsewhere). Any such set is
entangled at the frozen tokens. The terminal phase therefore has no exact route
between command arrangements.

\section{The Realized Write Order}
\label{app:trace}

Table~\ref{tab:conf} predicts the write order from the data alone, without any trained network.
Figure~\ref{fig:gen_order} verifies this prediction on trained models ($1k$
generations at two step budgets, $T \in \{10, 25\}$).

\begin{figure}[ht]
\centering
\includegraphics[width=0.48\textwidth]{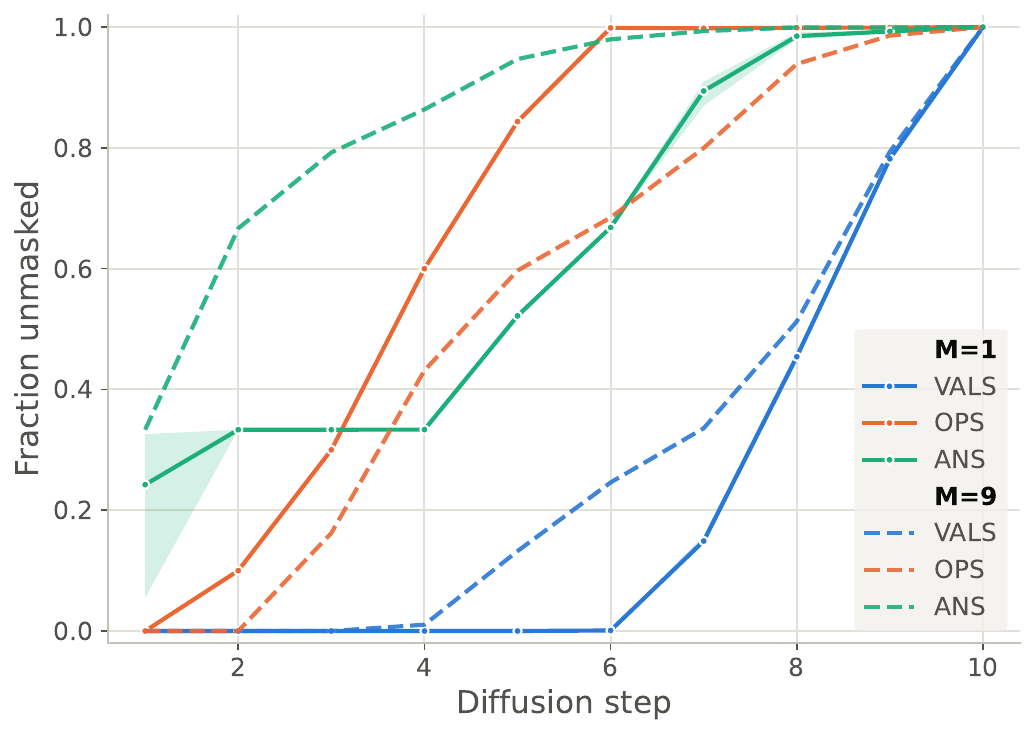}\hfill
\includegraphics[width=0.48\textwidth]{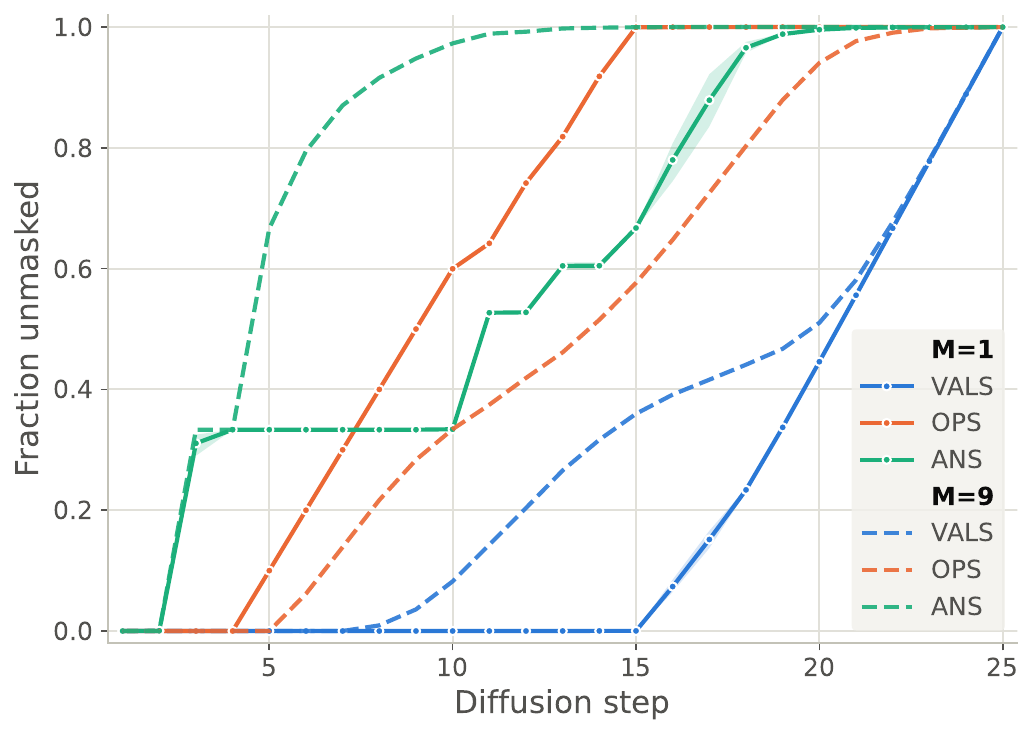}
\caption{Mean fraction of \texttt{VALS}, \texttt{OPS}, and \texttt{ANS} positions
written at each remasking step, over $1k$ generations. \textbf{Left:} $T=10$.
\textbf{Right:} $T=25$. $M=1$ is solid with markers, $M=9$ dashed. Shading is
the $5$--$95$ percentile range across four independently trained $M=1$ models.}
\label{fig:gen_order}
\end{figure}

\paragraph{What the traces show.}
\begin{itemize}
  \item In every condition and at every step budget, the first content position
        written is an \texttt{ANS} digit ($a_2$, confidence $1.00$).
  \item \texttt{VALS} is written last in all conditions.
  \item At $M=1$: \texttt{OPS} finishes before \texttt{VALS} starts, as
        Table~\ref{tab:conf} predicts (command confidence $0.80 > 0.10$).
  \item At $M=9$: \texttt{OPS} and \texttt{VALS} rise together after \texttt{ANS},
        as Table~\ref{tab:conf} predicts (synonym encoding drops command confidence to
        $0.16$, below the tens digit at $0.54$).
  \item The order is consistent across four independently trained $M=1$ models
        (different initialization, data order, and masking noise): the
        $5$--$95$ range is at most $0.071$ for \texttt{ANS} and $0.024$ for
        \texttt{VALS}. The write order is a property of the task and the
        decoding rule, not of any one model.
\end{itemize}

\paragraph{Effect of step budget.}
Halving the step budget (from $T=25$ to $T=10$) does not move value positions
earlier. It forces them into larger groups at the point where they are least
independent --- after the commands and answer have fixed the sum of the summed
pair. The parallelism a tighter budget buys is spent exactly where Theorem~\ref{thm:step}
gives the largest cost.

\FloatBarrier

\section{Errors Add and Cannot Cancel}
\label{app:onlyworse}

Theorem~\ref{thm:step} bounds the error of one step. For the absorbing family (each position
written once), the per-step errors add over the full run and none can offset another.

\paragraph{Condition: prefix-measurable schedule.}
Write $O_t$ for the positions written before step $t$ and $x_{O_t}$ for their values.
Call the schedule \emph{prefix-measurable} if the set $R_t$ written at step $t$ is
a function of $x_{O_t}$ alone.

\begin{itemize}
  \item \textbf{Prefix-measurable:} a schedule fixed in advance; any ranking by
        $\max_v \pi_i(v)$ (confidence based on the current conditional).
  \item \textbf{Not prefix-measurable:} ranking by the probability of a token
        freshly drawn at each candidate position (both the draw and the selected
        token are random).
\end{itemize}

\begin{theorem}[Per-step errors add]\label{thm:add}
Under a prefix-measurable schedule,
\[
  D_{\mathrm{KL}}(p \,\|\, q)
  = \sum_{t} \delta_t, \qquad
  \delta_t
  := \mathbb{E}_{x_{O_t} \sim p}\!\left[
       D_{\mathrm{KL}}\!\left(
         p(x_{R_t} \mid x_{O_t})
         \;\Big\|\;
         \prod_{i \in R_t} \pi_i(\cdot \mid x_{O_t})
       \right)
     \right] \geq 0.
\]
\end{theorem}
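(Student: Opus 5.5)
The plan is to apply the chain rule for KL divergence to the two laws over entire trajectories, and then note that the trajectory recovers the final sequence. Let $q$ be the law of the output of the absorbing-state sampler. Every position is written exactly once, and $R_t$ is a function of $x_{O_t}$ by prefix-measurability. So the run builds $x$ in the sequence of blocks $R_1,R_2,\dots$, and each block is determined by the tokens already written. Under $q$, the law of the next block given the past is $\prod_{i\in R_t}\pi_i(\cdot\mid x_{O_t})$. The comparison law is a reference process: use the same schedule, but draw each block from the true joint conditional $p(x_{R_t}\mid x_{O_t})$. By the chain rule of probability this reference process outputs exactly $p$, for any order of blocks. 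This holds because $R_t$ depends only on $x_{O_t}$, so revealing the blocks is just a data-dependent ordering of the chain rule.

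\textbf{Key step.} Both processes are adapted to the same filtration generated by $x_{O_t}$, and the map from trajectory to final sequence is a bijection. Given $x$, the entire sequence $(R_t, x_{R_t})_t$ is recomputed by running the deterministic schedule on prefixes of $x$. Hence the KL divergence between the output laws equals the KL divergence between the trajectory laws; there is no data-processing loss. The chain rule for KL then splits the trajectory divergence into a sum over $t$. The $t$-th term is the $p$-expectation over $x_{O_t}$ of $D_{\mathrm{KL}}\big(p(x_{R_t}\mid x_{O_t})\,\|\,\prod_i\pi_i\big)$, which is exactly $\delta_t$. Each $\delta_t$ is an expectation of KL divergences and is therefore nonnegative. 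Theorem~\ref{thm:step} further decomposes each $\delta_t$ into an expected total correlation plus per-position errors, although the statement does not need this.

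\textbf{Main obstacle.} The hard part is justifying the bijection between trajectories and outputs, which is where prefix-measurability is used. I would make it explicit by induction on $t$: $O_1=\varnothing$ is fixed, and $R_t=f_t(x_{O_t})$, so each $O_{t+1}=O_t\cup R_t$ is recovered from $x$. I would also check that every $p$-reachable prefix gets positive $q$-probability, or else accept $+\infty$ on both sides, so that the chain-rule terms are well defined. If the ranking instead used freshly drawn tokens, $R_t$ would carry extra randomness. The trajectory would then not be a function of $x$, and only the inequality $D(p\|q)\le$ (trajectory KL) would survive. This is the cancellation gap the Limitations section mentions.
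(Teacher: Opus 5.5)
Your proposal is correct and follows essentially the paper's argument. Prefix-measurability gives each $x$ a unique write path, so $p(x)=\prod_t p(x_{R_t}\mid x_{O_t})$ and $q(x)=\prod_t\prod_{i\in R_t}\pi_i(x_i\mid x_{O_t})$; taking the $p$-expectation of the log-ratio and conditioning each summand on $x_{O_t}$ yields the $\delta_t$. Your trajectory-level KL chain rule, with the trajectory-to-output map injective, is a repackaging of that direct computation, and your remark on token-reading schedulers matches the paper's caveat that only an upper bound survives there.
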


\begin{proof}[Proof sketch]
Prefix-measurability gives each sequence $x$ exactly one write path: $R_1$ is
fixed; $R_1$ and $x$ determine $x_{O_2}$ and hence $R_2$; and so on. Along this
path,
\[
  q(x) = \prod_t \prod_{i \in R_t} \pi_i(x_i \mid x_{O_t}), \qquad
  p(x) = \prod_t p(x_{R_t} \mid x_{O_t}).
\]
Take logarithms, subtract, take the expectation under $p$, and condition the $t$th
summand on $x_{O_t}$: the summand becomes the displayed divergence, which is
non-negative. Theorem~\ref{thm:step} further splits each $\delta_t$ into an expected total
correlation and the model's per-position error.
\end{proof}

\paragraph{Three consequences of non-negativity.}
\begin{enumerate}
  \item \textbf{Exactness requires every step to be exact.} The sampler reproduces
        $p$ only if $\delta_t = 0$ for every $t$, i.e., every step is exact at
        $p$-almost every prefix it reaches.
  \item \textbf{One step's error lower-bounds the run.} $D_{\mathrm{KL}}(p \| q)
        \geq \delta_t$ for each $t$: no other step can offset it.
  \item \textbf{A dependent group's cost persists.} A step that writes a dependent
        group on a set of prefixes with positive probability contributes at least
        the total correlation it incurs there.
\end{enumerate}

\paragraph{How errors compound (versus how they add).}
Theorem~\ref{thm:add} evaluates each $\delta_t$ under $p$ --- as though all prior steps were exact. Under 
this accounting, an error at step $k$ leaves the later terms untouched: the errors are additive and separate.

Switching to $q$, which reflects the errors already made, gives the same additivity, but shows something extra: an error at step $k$ carries the sampler to contexts where subsequent conditionals may be worse. Theorem~\ref{thm:dich} is
the extreme case --- a context at which $p$ defines no conditional at all. Neither
accounting admits a negative term, so in neither sense can a later step undo an
earlier one.

\paragraph{What a token-reading scheduler does not cover.}
If $R_t$ is chosen by ranking the probability of a freshly drawn token, the write
path is no longer a function of $x$. Each sequence is reachable along multiple
paths, so $q(x)$ is a sum of products and does not split into per-step terms.
Additivity holds on the space of paths, but data processing only bounds the output
error \emph{above} by the sum of per-step errors. Non-cancellation would require a
lower bound, which this argument does not provide.

Note that cancellation across different departure types is possible: in
Section~\ref{sec:measured}, raising temperature from $0.5$ to $1.5$ reduces the best pooled-token
TV from $0.129$ to $0.0352$, so tempering partially cancels the distortion from
write order. What the results above rule out is one step offsetting another within
a prefix-measurable run.

\section{How Much Parallelism the Exception Permits}
\label{app:schedule}

Corollary~\ref{cor:main} allows exact steps when the group written is conditionally independent.
On ScanAndAdd, one such group exists (the free value positions), and the parallelism
it provides can be computed.

\paragraph{An exact schedule.}
The rule: \emph{write every position at confidence $1.00$ as one group; otherwise
write the single lowest-indexed masked position in field order
\texttt{VALS}, \texttt{OPS}, \texttt{ANS}.}

This schedule produces three types of parallel steps:

\begin{enumerate}
  \item \textbf{First step --- always:} the three markers (constant) and $a_2$
        ($= 0$ in every sample). Four positions, one step.

  \item \textbf{Mid-run --- trailing determined commands:} once the remaining
        unwritten command positions are forced (either both adds are placed, or
        as many positions remain as adds still to place), those trailing commands
        are written together. Enumerating the 45 command arrangements: on average
        $64/9 \approx 7.11$ commands are written one at a time before this point,
        leaving $2.89$ written together. Step count for this phase: $64/9 + 1$.

  \item \textbf{Last step:} once all values and commands are written, $a_1$ and
        $a_0$ are determined. Two positions, one step.
\end{enumerate}

The nine value positions are written one at a time, in \texttt{VALS} field order.

\paragraph{Expected step count.}
\[
  \underbrace{1}_{\text{markers}+a_2}
  + \underbrace{9}_{\text{values, one each}}
  + \underbrace{\tfrac{64}{9}}_{\text{commands, one each}}
  + \underbrace{1}_{\text{trailing commands}}
  + \underbrace{1}_{a_1, a_0}
  = \frac{172}{9} \approx 19.11 \text{ steps},
\]
ranging from $14$ to $21$ across the 45 arrangements. Speedup: $25 / 19.11
\approx 1.31\times$.

\paragraph{Why a confidence ranking cannot use this exception.}
This schedule writes value positions \emph{before} the answer, in ascending order
of confidence. A confidence ranking does the opposite: answer first, values last.
By the time a confidence ranking reaches the value positions, the answer has made
the summed pair dependent, and the exception is unavailable.

A fallback that ranked drawn tokens instead of field order would make the reveal
path random --- a case Appendix~\ref{app:onlyworse} does not cover.

\section{Other Departures from Exact Sampling}
\label{app:interventions}

\paragraph{One criterion covers all cases.}
A sampler with exact conditionals reproduces $p$ if and only if, at every step:
\begin{enumerate}
  \item[(a)] it draws from an unmodified conditional of $p$, and
  \item[(b)] the positions it reveals at that step are conditionally independent
             given what has already been revealed.
\end{enumerate}
Violating (a) is score shaping and enters the per-position error terms of
Theorem~\ref{thm:step}. Violating (b) costs the total correlation of the revealed set and
enters the grouping term of Theorem~\ref{thm:step}, which training does not reduce.

\paragraph{Interventions that always shift.}
These distort for any $p$ with $H(p) > 0$.

\begin{table}[ht]
\centering
\small
\caption{Interventions that shift the distribution for any non-degenerate $p$.}
\label{tab:always}
\begin{tabularx}{\linewidth}{lX}
\toprule
\textbf{Intervention} & \textbf{Why} \\
\midrule
Greedy / argmax decoding           & output is a point mass \\
Beam search                        & targets the joint MAP \\
Best-of-$n$ with a reranker        & reweights by the reranker unless it is constant \\
Classifier-free guidance ($\ne 1$) & renormalized product of two distributions \\
\bottomrule
\end{tabularx}
\end{table}

\paragraph{Interventions that shift under a condition.}
Each is a no-op in an identifiable special case.

\begin{table}[ht]
\centering
\small
\caption{Interventions that shift only when the stated condition holds. $q$
denotes the conditional at the step in question; $R$ is the set revealed.}
\label{tab:conditional}
\begin{tabularx}{\linewidth}{lX}
\toprule
\textbf{Intervention} & \textbf{Shifts if and only if} \\
\midrule
Temperature $\tau \ne 1$             & some conditional is non-uniform on its support \\
Top-$k$                              & $k < |\operatorname{supp}(q)|$ at some step \\
Top-$p$ / nucleus                    & the nucleus omits some support element \\
Min-$p$, $\epsilon$-, typical        & some support token falls below the threshold \\
Parallel reveal                      & $\mathrm{TC}(X_R \mid x_{O_t}) > 0$ for some $R$ \\
Fewer steps than positions ($T < L$) & some step co-reveals a dependent set \\
Remasking                            & same condition; the remasking kernel is not the issue \\
Constrained / grammar decoding       & the constraint set $\mathcal{C}$ has $p(\mathcal{C}) < 1$ \\
\bottomrule
\end{tabularx}
\end{table}

\paragraph{Temperature on ScanAndAdd.}
Temperature leaves a conditional unchanged if and only if that conditional is
uniform on its support.

\begin{itemize}
  \item Under a dependency-respecting order (values written before answer): value
        conditionals are uniform at $0.10$ per digit, so temperature only affects the command and answer
        distributions.
  \item Under an answer-first order: value conditionals are no longer uniform (the
        answer constrains the summed pair). Temperature distorts them as well.
\end{itemize}

The same temperature parameter is inert or distorting depending on the write order.

\paragraph{Writing the answer before the values.}
Done one position at a time with exact conditionals, this is a valid chain-rule
factorization of $p$ and introduces no approximation by itself. Its cost comes
indirectly: it makes the summed value pair dependent, so any later parallel step
writing both of them leaves the support. The cost is charged to the two terms of
Theorem~\ref{thm:step}.

\paragraph{Entropy split (ScanAndAdd only).}
On ScanAndAdd, $p$ is uniform on its support, so any sampler that stays inside
the support without reproducing $p$ has strictly lower entropy. This splits
departures into two kinds:

\begin{table}[ht]
\centering
\small
\caption{How each departure moves the generated distribution on ScanAndAdd.
The entropy split uses uniformity of $p$ and does not hold for general $p$.
KL costs hold generally.}
\label{tab:departures}
\begin{tabularx}{\linewidth}{p{3.6cm}llX}
\toprule
\textbf{Departure} & \textbf{Entropy vs.\ $H(p)$} & \textbf{Support} & \textbf{Basis} \\
\midrule
Temperature $\tau < 1$      & lower                            & inside                  & entropy is Schur-concave \\
Top-$k$ / nucleus           & lower                            & inside                  & entropy is Schur-concave \\
Greedy / MAP                & $\to 0$                          & inside                  & limit of the above \\
Aggregate before its inputs & unchanged                        & leaves once grouped     & Section~\ref{sec:instance} \\
Writing a dependent group   & \textbf{higher, by }$\mathrm{TC}$ & \textbf{leaves, if entangled} & Theorem~\ref{thm:step} \\
\bottomrule
\end{tabularx}
\end{table}

\paragraph{Interventions that never shift.}
With exact conditionals, the rules below reproduce $p$. All assume one position
per step.

\begin{table}[ht]
\centering
\small
\caption{Interventions that preserve the distribution with exact conditionals
(one position per step). The first row's assumption matters: reveal order costs
nothing, but the choice of \emph{which} positions to reveal together does, and
no scheduler reading per-position distributions can certify that choice
(Theorem~\ref{thm:cert}).}
\label{tab:never}
\begin{tabularx}{\linewidth}{lX}
\toprule
\textbf{Intervention} & \textbf{Why} \\
\midrule
Any reveal order, one position per step  & chain-rule factorization of $p$; one-per-step is the binding constraint \\
Speculative decoding, exact verification & distribution-preserving by construction \\
$\tau = 1$ with $k \geq |\operatorname{supp}(q)|$ & no-ops \\
\bottomrule
\end{tabularx}
\end{table}

\ifneurips\else
\applefootnote{ \textcolor{textgray}{\sffamily Apple and the Apple logo are trademarks of Apple Inc., registered in the U.S. and other countries and regions.}}
\fi

\end{document}